\def\eqref#1{equation~\ref{#1}}
\def\1{\bm{1}}
\DeclareMathAlphabet{\mathsfit}{\encodingdefault}{\sfdefault}{m}{sl}
\SetMathAlphabet{\mathsfit}{bold}{\encodingdefault}{\sfdefault}{bx}{n}
\definecolor{mydarkblue}{rgb}{0,0.08,0.45}
\definecolor{mydarkred}{rgb}{0.6,0,0}
\definecolor{myblue}{HTML}{268BD2}
\definecolor{mygreen}{HTML}{658354}
\newtheorem{definition}{Definition}[section]
\newtheorem{theorem}{Theorem}[section]
\newtheorem{lemma}[theorem]{Lemma}
\newenvironment{proof}{{\noindent\it Proof}\quad}{\hfill $\square$\par}
  {\hfill$\square$\par}
\newcommand\piref{\pi_\textrm{ref}}
\newcommand{\rebuttal}[1]{\textcolor{black}{#1\xspace}}
\title{General Exploratory Bonus for Optimistic Exploration in RLHF}
\author{Wendi Li, Changdae Oh, Sharon Li \\
Department of Computer Sciences\\
University of Wisconsin-Madison\\
\texttt{\{wli679,changdae,sharonli\}@cs.wisc.edu} 
}
\begin{document}

\maketitle

\begin{abstract}
Optimistic exploration is central to improving sample efficiency in reinforcement learning with human feedback, yet existing exploratory bonus methods to incentivize exploration often fail to realize optimism. We provide a theoretical analysis showing that current formulations, under KL or $\alpha$-divergence regularization, unintentionally bias exploration toward high-probability regions of the reference model, thereby reinforcing conservative behavior instead of promoting discovery of uncertain regions. To address this pitfall, we introduce the \textbf{General Exploratory Bonus} (\textbf{GEB}), a novel theoretical framework that provably satisfies the optimism principle. GEB counteracts divergence-induced bias via reference-dependent reward regulation and unifies prior heuristic bonuses as special cases, while extending naturally across the full $\alpha$-divergence family. Empirically, GEB consistently outperforms baselines on alignment tasks across multiple divergence settings and large language model backbones. These results demonstrate that GEB offers both a principled and practical solution for optimistic exploration in RLHF. Code is available \href{https://github.com/WindyLee0822/GEB}{here}.
\end{abstract}

\section{Introduction}

Despite the acknowledged significance of online exploration for reinforcement learning with human feedback (RLHF) \citep{onlinestrength-superior,onlinestrength-google,onlinestrength-onpolicy}, there remains a paucity of theoretical frameworks governing \textit{how to explore}. As shown in  Fig.~\ref{fig:intro} (1, top), standard online RLHF algorithms \citep{onlinedpo,iterative-reasoning,rloo} generally rely on passive exploration, \emph{i.e.}, the stochasticity of the policy itself to generate responses, with no mechanism to incentivize novelty or diversity. As a result, this approach can be notoriously sample-inefficient. When the optimal behavior resides in low-probability regions, passive exploration is unlikely to discover it, leading to policies that remain trapped around local optima.

To address this, some works \citep{prompt-1,prompt-2,human-2,sample-information,sea} have attempted to devise sample-efficient algorithms, inspired by the principle \emph{optimism in the face of uncertainty}. As illustrated in Fig.~\ref{fig:intro} (2, top), 
the principle aims to generate responses for regions of high epistemic uncertainty, thus encouraging data collection in unexplored areas for further training. To operationalize this, recent attempts~\citep{selm,xpo,vpo} encourage exploration by adding \emph{exploratory bonuses} to reward modeling, which is practically optimizeable for large language models. These methods intend to artificially inflate rewards in underexplored regions, nudging the policy toward more informative data collection.

Unfortunately, our theoretical analysis in Section~\ref{sec:intuitive} reveals a fundamental pitfall: under the common KL-regularized RLHF, the existing theoretical framework of exploratory bonuses fails to satisfy optimism. In particular, we prove that existing bonus formulations can undesirably drive the policy $\pi$ toward the reference policy $\piref$ due to the divergence regulation in the exploratory bonus, and the induced bonus actually biases exploration toward high-probability regions of the reference model. As illustrated in Fig.~\ref{fig:intro} (II, bottom), the bonus disproportionately amplifies rewards for regions already well-covered by $\piref$, thereby reinforcing conservative behavior rather than driving exploration into uncertain regions. This failure is not confined to KL-divergence; we further extend our analysis to the more general $\alpha$-divergence family and prove that the same collapse persists across a wide range of divergence-regularized objectives. Thus, while existing approaches appear to encourage exploration, they in fact undermine the very principle of optimism they aim to realize.

Motivated by these failures, we propose a new framework, \textbf{General Exploratory Bonus (GEB)}, which theoretically unifies existing approaches while provably satisfying optimism (Section~\ref{sec:GEB}). 
GEB corrects the failure modes of prior approaches by directly introducing a reference-dependent regulation into the reward.
This adjustment offsets the undesired conservatism induced by divergence regularization, allowing the exploratory bonus to satisfy optimism---it increases the probability of responses rarely sampled to pursue potentially more preferred answers, as shown in Fig.~\ref{fig:intro} (III, bottom). Importantly, GEB provides a unified formulation: prior heuristic exploratory bonuses can be reinterpreted as special cases, and the framework naturally extends to the full class of $\alpha$-divergences. Beyond correcting the theoretical shortcomings, GEB remains practically implementable—it can be seamlessly integrated into the standard iterative RLHF loop without additional sampling cost.

We validate GEB on a large-scale alignment task across different divergences and model backbones. Empirically, GEB consistently yields stronger alignment compared to its counterpart of passive exploration. For example, the three GEB variants that we consider generally outperform the iterative f-DPO~\citep{gibbssampling} across different divergence regulations, while the most performant variant surpasses several existing optimistic exploration methods that incorporate exploratory bonuses~\citep{selm, xpo, vpo}. 
By analyzing the distribution of sampled responses, we validate that GEB can successfully encourage sampling in the region of small $\piref$, thereby effectively achieving optimistic exploration. We summarize our main contributions:

\begin{figure}[t!]
    \centering
    \vspace{-0.25cm}
\includegraphics[width=0.98\linewidth]{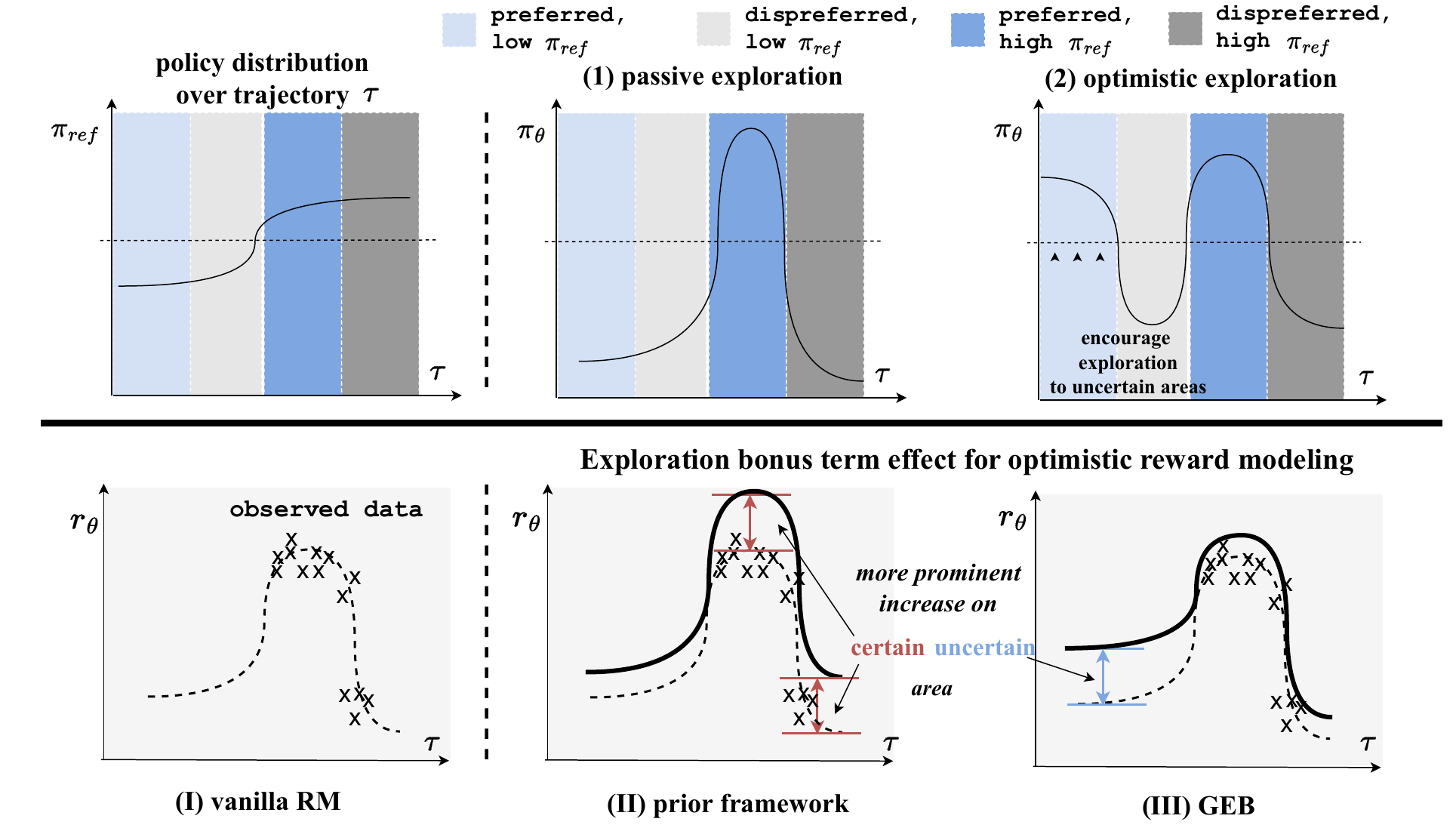}
\vspace{-0.3cm}
    \caption{
    \textbf{The upper part} compares passive exploration and optimistic exploration. Optimistic exploration stimulates the trajectories $\tau$ of small $\pi_\text{ref}$ (seldom visited/uncertain), while passive exploration sticks to the high-$\pi_\text{ref}$ region, failing to approach global optima. {The dashed line separates regions of high vs. low likelihood under the learning policy $\pi_{\theta}$.} \textbf{The lower part} contrasts the effect of the exploration bonus term in optimistic reward modeling between prior works and our GEB. Prior works often emphasize rewards in frequently visited regions, which constrains exploration within certain areas. In contrast, our GEB amplifies rewards in seldom-visited regions, thereby encouraging further sampling in uncertain areas and successfully achieving optimistic exploration.}
    \label{fig:intro}
    \vspace{-0.35cm}
\end{figure}

\begin{enumerate}
    \item We formally prove that the existing theoretical framework of exploratory bonuses under KL and $\alpha$-divergence regularization fails to achieve optimistic exploration.
    \item We introduce General Exploratory Bonus (GEB), a novel theoretical framework of optimistic exploration for RLHF that provably satisfies the optimism principle and unifies prior heuristic bonuses.
    \item We empirically validate GEB on LLM alignment tasks, showing improved performance and broad applicability across multiple divergence families.
\end{enumerate}

\section{Preliminaries}\label{sec:prelim}
\vspace{-0.35em}

\paragraph{Iterative online RLHF.}
\rebuttal{
Let $x$ be a prompt sampled from a distribution $\rho$ and $y$ be a response given $x$, which is sampled from a policy $\pi(\cdot|x)$ modeled by a language model. We denote by $r(x,y)$ a real-valued reward model. An iterative online RLHF proceeds for rounds $T$, where each round $t=1,...,T$ has the following three steps: (i) The reward model $r_t$ is trained on the human preference dataset $\mathcal{D}_t = \{(x,y^w,y^l)\}$, where $y^w,y^l$ denote the preferred and dispreferred response to $x$; (ii) The policy $\pi_t$ is updated to maximize the reward $r_t(x,y)$ for responses $y\sim \pi_{t}(\cdot|x)$ conditioned on prompt $x$; and (iii) using the updated policy, we sample $\tilde{x}\sim\rho$, and generate multiple response pairs $\  (\tilde{y}_1,\tilde{y}_2)\sim \pi_t(\cdot|\tilde{x})$. Human evaluators then annotate these pairs to produce preference-labeled data $\{(\tilde{x},\tilde{y}^w,\tilde{y}^l)\}$. The dataset for the next round is formed by $\mathcal{D}_{t+1}=\mathcal{D}_{t}\cup\{(\tilde{x},\tilde{y}^w,\tilde{y}^l)\}$.
}
For reward modeling step (i), we typically adopt the Bradley-Terry objective \citep{bt-model}:
\begin{equation}
     r_t = \arg\min_{r} \mathcal{L}_{BT}(\mathcal{D}_t,r) = \arg\min_{r} \mathbb{E}_{(x,y^w,y^l)\sim \mathcal{D}_t} -\log [\sigma(r(x,y^w)-r(x,y^l))],
\label{eq:bt}
\end{equation}
where $\sigma$ denotes the sigmoid function.
Next, in each step (ii), given the learned reward function $r_t$, the policy $\pi_t$ is updated to maximize the expected reward, often with a KL-regularization as follows
\begin{equation}
     \pi_t = \arg\max_{\pi}\mathcal{J}_{\beta,\text{KL}}(\pi,r_t) =  \arg\max_{\pi} \mathbb{E}_{x\sim \rho,y\sim \pi(\cdot|x)} r_t(x,y) - \beta \mathbb{D}_\text{KL}(\pi\Vert\pi_\textrm{ref}),
\label{eq:rlhf}
\end{equation}
where $\beta > 0$ is a hyperparameter and $\piref$ is the reference model. The effectiveness of iterative online RLHF \citep{iterative-rlhf1,rlhflow} has been validated in various real-world systems such as Claude \citep{iterative-rlhf} and LLaMA-series \citep{llama-2,llama-3}, but there is still much room for improvement in terms of sample-efficient exploration.

\paragraph{Sample inefficiency of iterative online RLHF.}  
In online RLHF, standard online sampling is usually performed passively, relying solely on the LLM policy's inherent randomness.
However, if the policy assigns a small probability to the optimal action, passive exploration may never explore it.
Some recent theoretical analyses~\citep{activepreference,rlhflow} and empirical evidence~\citep{avoid,vpo} present that the passive approach fails to sufficiently explore the prompt-response space. Particularly, \citet{xpo} demonstrates that the sample complexity can be exponential in $1/\beta$ for passive exploration, which is unacceptable in the small-$\beta$ regime.
\rebuttal{After then, follow-up studies propose to implement the principle ``optimism towards uncertainty" into RLHF algorithms, i.e., encourage exploration of uncertain trajectories. Several works on this try to estimate uncertainty by leveraging some uncertainty quantification techniques, such as elliptical potential \citep{iterative-rlhf}, Bayesian modeling \citep{sample-information}, and epistemic neural network training \citep{sea}. However, these methods are generally computationally prohibitive in LLM-scale settings. 
Therefore, recent works~\citep{xpo,vpo,selm} propose \emph{exploratory bonuses} for optimistic exploration, which can be computationally more tractable for LLM-based optimization.} 

\vspace{-0.2cm}
\section{Exploratory Bonus and How It Can Fail}
\label{sec:intuitive}

In this section, we first provide the iterative online RLHF formulation with an exploratory bonus (Section~\ref{sec:bonus}). We then theoretically prove that the existing formulation can fail to achieve optimistic exploration under both KL-constrained RLHF (Section~\ref{sec:fail}) and a more general $\alpha$-divergence-regularized RLHF (Section~\ref{sec:fail-alpha}), motivating our proposed method in Section~\ref{sec:GEB}.

\vspace{-0.2cm}
\subsection{Exploratory Bonus}
\vspace{-0.3em}
\label{sec:bonus}
To improve the sample efficiency of iterative online RLHF, recent works~\citep{selm,vpo} introduce exploratory bonuses, \rebuttal{which aim to encourage the policy model to explore the under-visited space given an optimistic reward estimation. These approaches modify the standard RLHF loop by adding an exploratory bonus term $\mathcal{L}_{\text{bonus}}$ in the reward modeling phase.} Specifically,
in the $t$-th iteration, the reward model $r_t$ and policy $\pi_t$ are optimized by
\begin{align}
    r_{t} &=\arg\min_r \Big[\mathcal{L}_{BT}(\mathcal{D}_t,r) -  \kappa \mathcal{L}_{\text{bonus}}(r) \Big], \label{eq:r}\\
    \pi_t = \arg\max_\pi  &\mathcal{J}_{\beta,\text{KL}}(\pi,r_t) = \arg\max_{\pi} \mathbb{E}_{x\sim \rho,y\sim \pi(\cdot|x)} r_t(x,y) - \beta \mathbb{D}_\text{KL}(\pi \Vert\piref), \label{eq:policy}
\end{align}
where $\kappa>0$ is a hyperparameter. By Eq.~\ref{eq:r}, the reward model $r_t$ should not only fit the observed data in $\mathcal{D}_t$, but also learn to maximize the bonus term $ \mathcal{L}_{\text{bonus}}(r)$.

\rebuttal{To boost exploration, the bonus term is designed to amplify the probability mass of policy more in underexplored areas rather than incentivizing it solely towards high empirical reward areas. As mentioned in \S~\ref{sec:prelim}, early works on RLHF optimistic exploration are computationally prohibitive in the LLM fine-tuning regime. Thus, it is necessary to set a new approach that not only aligns with the principle of optimism in the face of uncertainty but is also cost-effective. For this, we derive a new condition for the exploration bonus to achieve optimism, avoiding direct uncertainty quantification:}
\begin{definition}[Optimism condition for exploration bonus]
\rebuttal{Given an input prompt $x$ and a response $y$}, when a reward model $r$ and a policy $\pi$ are computed with Eq.~\ref{eq:r} and Eq.~\ref{eq:policy}, respectively, the exploratory bonus $\mathcal{L}_{\text{bonus}}$ achieves optimism, if 
\begin{equation}\label{eq:def} 
    \frac{\partial}{\partial \pi_s (y|x)}(\frac{\partial \mathcal{L}_{\text{bonus}}(r(x,y))}{\partial \pi(y|x) }) < 0,  
\end{equation}
where $\pi_s$ is a typical sampling policy, a joint policy on all iterations up to the current iteration.
\label{def:optim}
\end{definition}

\rebuttal{Specifically, at the $t$-th iteration, the typical sampling policy $\pi_s=\pi_1\circ\pi_2\circ\cdots\circ\pi_t$ is a joint distribution of all previous policies up to the current iteration. This distribution is not directly computable; rather, it serves as a theoretical construct describing how responses in $\mathcal{D}_t$ are generated. In Eq.~\ref{eq:def}, rather than characterizing $\mathcal{L}_{\text{bonus}}$ in its original function space, we define it by a condition on its partial derivatives with respect to two policies: current policy and typical policy. This new optimism condition not only enables us to flexibly define $\mathcal{L}_{\text{bonus}}$, but also serves as a core tool for theoretically analyzing existing methods.}
\rebuttal{Although $\mathcal{L}_{\text{bonus}}$ appears unrelated to the current policy $\pi$---as it is defined in terms of the reward model $r(x,y)$---\textbf{the policy-reparameterized reward $r_{\pi}(x,y)$ allows us to express $r(x,y)$ directly in terms of $\pi$} as follows~\citep{dpo}:
$$
     r(x,y):=r_{\pi}(x,y) = \beta \log \frac{\pi(y|x)}{\piref(y|x)} + \beta \log Z(x),~~~~\text{where}~~Z(x) = \mathbb{E}_{y\sim \piref} \exp( r(x,y)/\beta).
$$
This is derived from the closed-form solution of the proximal preference optimization (Eq.~\ref{eq:rlhf}) as $ \pi(y|x) = \frac{\exp( r(x,y)/\beta)}{Z(x)}$. Thanks to this alternative representation of the reward, we can express and interpret the bonus term with respect to our current policy $\pi$, which yields the following implication.}
\begin{tcolorbox}[width=\linewidth, colback=white!95!black]
\vspace{-0.3em}
\paragraph{Implication.} Our new optimism condition requires the derivative of the bonus term with respect to the current policy, \rebuttal{$\partial \mathcal{L}_{\text{bonus}}(r_{\pi}(x,y))/\partial \pi(y|x)$}, to be negatively correlated with \rebuttal{the typical policy $\pi_s$}. In other words, as a response $y$ is likely rare sample under $\pi_s$ (i.e., uncertain or underexplored responses), it should receive a larger ascending force in the policy distribution $\pi$, i.e., a higher \rebuttal{$\partial \mathcal{L}_{\text{bonus}}(r_{\pi}(x,y))/\partial \pi(y|x)$}. \rebuttal{This new definition of the optimism principle, which is specified through the lens of partial derivative alignment, ensures the exploratory bonus nudges the exploration towards an uncertain response region without explicit uncertainty quantification.}
In practice, $\pi_s$ can be substituted by the reference model policy $\piref$ or intermediate checkpoints of $\pi$ across iterations. We adopt the commonly used $\piref$ as $\pi_s$ in our following demonstration. 
\vspace{-0.3em}
\end{tcolorbox}

{\vspace{-0.15em}
\subsection{Failure Under KL-constrained RLHF}\label{sec:fail}
}
\vspace{-0.35em}

Previous works, including \citet{selm} and~\citet{vpo}, formulate the exploratory bonus with $\mathcal{L}_{\text{bonus}}(r) = \max_\pi \mathcal{J}_{\beta,KL}(\pi,r)$. Under this formulation, optimizing the exploratory bonus in Eq.~\ref{eq:r} yields a min–max bilevel objective: $\min_r -\kappa \max_\pi [\mathbb{E}_{x\sim\rho,y\sim\pi} r(x,y) - \beta \mathbb{D}_\text{KL}(\pi\Vert\piref)]$. Intuitively, this objective encourages $r$ not only to fit the observed data via $\mathcal{L}_{BT}$ but also to assign high reward to unobserved regions by maximizing $ \max_\pi \mathbb{E}_{x\sim\rho,y\sim\pi} r(x,y)$ in $\mathcal{L}_{\text{bonus}}(r)$.
Here, we theoretically show that such formulations can suffer from optimism failures under KL-regularized RLHF.

\begin{lemma}[Optimism failure under KL-divergence.]
Let $r_1 = \arg\min_r \mathcal{L}_{BT}(\mathcal{D},r)$ be a reward model trained with the vanilla BT loss, and let $r_2 = \arg\min_r [\mathcal{L}_{BT}(\mathcal{D},r) - \kappa \max_\pi \mathcal{J}_{\beta,\text{KL}}(\pi,r) ]$ be a reward model trained with an additional exploratory bonus. If the policy is optimized via Eq.~\ref{eq:policy}, then $r_1$ and $r_2$ yield the same set of policies.
\label{lemma:fail-kl}
\end{lemma}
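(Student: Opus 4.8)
The plan is to compute the bonus in closed form and then show that it is completely blind to the policy-relevant part of the reward, so it cannot alter the induced policy. First I would evaluate $\mathcal{L}_{\text{bonus}}(r) = \max_\pi \mathcal{J}_{\beta,\text{KL}}(\pi,r)$ via the Gibbs variational principle. The inner maximizer of $\mathbb{E}_{y\sim\pi} r(x,y) - \beta\,\mathbb{D}_{\text{KL}}(\pi\Vert\piref)$ is exactly the closed-form policy $\pi_r(y|x) = \piref(y|x)\exp(r(x,y)/\beta)/Z_r(x)$ with $Z_r(x) = \mathbb{E}_{y\sim\piref}\exp(r(x,y)/\beta)$, consistent with the reparametrization $r(\pi)=\beta\log(\pi/\piref)+\log Z(x)$ recorded in the excerpt. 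Substituting this maximizer back yields the log-partition form $\mathcal{L}_{\text{bonus}}(r) = \beta\,\mathbb{E}_{x\sim\rho}[\log Z_r(x)]$.

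Next I would establish the gauge structure of the policy map. The policy defined by Eq.~\ref{eq:policy} is invariant under any prompt-dependent shift $r(x,y)\mapsto r(x,y)+c(x)$, since the factor $\exp(c(x)/\beta)$ cancels between numerator and normalizer. Hence two rewards induce the same policy if and only if they differ by such a $c(x)$, and I can decompose any reward as $r = \tilde r + c(x)$, where $\tilde r(x,y) = \beta\log(\pi_r(y|x)/\piref(y|x))$ is the canonical policy-determining representative, which satisfies $Z_{\tilde r}(x) = \mathbb{E}_{y\sim\piref}[\pi_r/\piref] = 1$.

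The key step is to show the two loss terms decouple along this decomposition. On one hand, $\mathcal{L}_{BT}$ depends only on within-prompt differences $r(x,y^w)-r(x,y^l)$, so it is invariant to $c(x)$ and is a function of $\tilde r$ (equivalently of $\pi_r$) alone. On the other hand, substituting the decomposition into the bonus gives $\log Z_r(x) = c(x)/\beta + \log Z_{\tilde r}(x) = c(x)/\beta$, so $\mathcal{L}_{\text{bonus}}(r) = \mathbb{E}_{x\sim\rho}[c(x)]$ depends only on the gauge $c$ and is entirely independent of the induced policy. Thus the objective defining $r_2$ splits into a term in $\tilde r$ plus a term in $c$, and minimizing over the policy-relevant component $\tilde r$ returns precisely the BT minimizer that defines $r_1$. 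Reading off the induced policies then gives $\pi_{r_1} = \pi_{r_2}$, so the two yield the same set of policies.

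I expect the main obstacle to be the careful treatment of the $\arg\min$: the term $-\kappa\,\mathbb{E}_{x\sim\rho}[c(x)]$ is unbounded below in $c$, so $r_2$ is not a unique minimizer but a gauge-equivalence class, and a naive stationarity argument would even appear contradictory. The cleanest way to handle this is exactly the separation above—phrasing the conclusion at the level of induced policies (the \emph{set} of policies) rather than reward functions—so the divergent gauge direction is quotiented out and shown to be provably policy-irrelevant. This separation also pinpoints \emph{why} optimism fails: the bonus rewards only the policy-irrelevant mean shift $c(x)$, and exerts no force whatsoever on the component $\tilde r$ that would raise the probability of underexplored responses.
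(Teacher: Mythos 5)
Your proposal is correct, and its first half is identical to the paper's proof: both substitute the Gibbs maximizer $\pi_r=\piref e^{r/\beta}/Z_r$ into the bonus to obtain the log-partition form $\beta\,\mathbb{E}_{x\sim\rho}[\log Z_r(x)]$. Where you diverge is in how the policy-irrelevance of this term is established. The paper reparameterizes the reward class by the log-ratio $r=\beta\log(\pi/\piref)$ --- i.e., it silently works in your canonical gauge $Z_{\tilde r}(x)=1$ --- and then observes that the bonus becomes $\beta\log\mathbb{E}_{y\sim\piref}[\pi/\piref]=\beta\log 1=0$, so the objective collapses to vanilla DPO. You instead keep the full reward class, split $r=\tilde r+c(x)$, and show the objective decouples as $\mathcal{L}_{BT}(\tilde r)-\kappa\,\mathbb{E}_{x\sim\rho}[c(x)]$, with the BT term living entirely on the policy-determining component and the bonus living entirely on the gauge. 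The two arguments encode the same insight, but your version buys something the paper's does not: it makes explicit that the objective defining $r_2$ is \emph{unbounded below} in the gauge direction, so the $\arg\min$ over the unrestricted reward class does not literally exist, and the lemma is only well-posed when read at the level of induced policies (which is how it is stated). The paper's gauge-fixing step hides this issue entirely --- its proof is shorter, but yours is the more rigorous rendering of the same argument, and your closing observation (the bonus exerts force only on the policy-irrelevant mean shift $c(x)$) is a sharper articulation of \emph{why} optimism fails than the paper's concluding remark.
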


See the proof in Appendix~\ref{apd:kl_failure}. The lemma shows that incorporating the exploratory bonus $\mathcal{L}_{\text{bonus}}(r) =\max_{\pi}\mathcal{J}_{\beta,\text{KL}}(\pi,r)$ into the reward training objective \emph{\textbf{fails to induce the policy to sample from low-$\piref(y|x)$ regions, i.e., unexplored responses}}. That is, $\mathcal{L}_{\text{bonus}}$ is ineffective for optimism. We next extend the result beyond KL divergence to a more general class of $\alpha$-divergence families.

\subsection{Generalization to $\alpha$-divergence-constrained RLHF}
\label{sec:fail-alpha}

\begin{wraptable}{r}{0.5\linewidth}
\vspace{-2em}
\caption{Realized exploratory bonus under different divergence classes when $\mathcal{L}_{\text{bonus}}(r) = \max_\pi \mathcal{J}_{\beta,f}(\pi,r) $.}
    \label{tab:exploration-bonus-preview}
    \centering
    \resizebox{\linewidth}{!}{
    \small
    \begin{tabular}{c|c}
    \toprule
      $f$ & exploratory bonus  \\
      \midrule
        reverse KL & constant
        \\
        forward KL & $\mathbb{E}_{x\sim\rho,y\sim\piref}\log \frac{\pi(y|x)}{\piref(y|x)}$  
        \\
        Hellinger distance & $\mathbb{E}_{x\sim\rho,y\sim\piref} \sqrt{\frac{\pi(y|x)}{\piref(y|x)}}$ 
        \\
        \bottomrule
    \end{tabular}
    }
    \vspace{-2em}
\end{wraptable}

In this subsection, we theoretically show that the failure of optimism can broadly be extended to the $\alpha$-divergence class. Many common divergences, such as reverse KL-divergence, Hellinger distance, and forward KL-divergence, are special cases of $\alpha$-divergence.

\begin{definition}[$\alpha$-divergence class] \label{def:alpha-div}
     An $\alpha$-divergence is a certain type of function $D(p\vert q) = \int f(\frac{\mathrm{d} p}{\mathrm{d} q}) \mathrm{d} q$ that measures the difference between two probability distributions $p$ and $q$, where 
     $$f(x) = \frac{x^\alpha-\alpha x - (1-\alpha)}{\alpha(1-\alpha)},$$ 
     and $\alpha$ is a hyperparameter typically with $0\leq\alpha\leq 1$. 
\end{definition}

\begin{lemma}[Optimism failure under $\alpha$-divergence.]
Consider an objective $\mathcal{J}_{\beta, f}(\pi,r)  = \mathbb{E}_{x\sim\rho,y\sim\pi(y|x)} r(x,y) + \beta\mathbb{E}_{x\sim\rho,y\sim\piref(y|x)} f(\frac{\pi(y|x)}{\piref(y|x)})$, where $f$ belongs to $\alpha$-divergence class. If a reward is trained with $\hat r = \arg\min_r [\mathcal{L}_{BT}(\mathcal{D},r) -  \kappa \mathcal{L}_{\text{bonus}} ]$ and a policy $\pi$ is updated by $\arg\max_\pi \mathcal{J}_{\beta,f}(\pi, \hat r)$ with $\mathcal{L}_{\text{bonus}} = \max_\pi \mathcal{J}_{\beta,f}(\pi,r)$, the gradient of the bonus satisfies $\frac{\partial^2\mathcal{L}_{\text{bonus}}(r_{\pi})}{\partial \piref \partial \pi } \geq 0$, which means $\mathcal{L}_{\text{bonus}}$ encourage trajectories with large $\piref$ more strongly, in contradiction to the optimism principle (Definition~\ref{def:optim}). 
\label{lemma:fail-f}
\end{lemma}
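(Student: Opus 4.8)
The plan is to mirror the structure of the proof of Lemma~\ref{lemma:fail-kl}: first collapse the bilevel bonus $\mathcal{L}_{\text{bonus}} = \max_\pi \mathcal{J}_{\beta,f}(\pi,r)$ into a single-level functional by solving the inner maximization in closed form, then reparameterize the reward in terms of the policy (the $\alpha$-divergence analogue of the DPO reparameterization used in the KL case), so that $\mathcal{L}_{\text{bonus}}$ becomes an explicit functional of $\pi$ and $\piref$, and finally differentiate twice and read off the sign. The structural fact that makes this tractable is that the inner program defining the bonus and the outer policy update are literally the same optimization $\arg\max_\pi \mathcal{J}_{\beta,f}(\pi,r)$, so the bonus-maximizer coincides with the policy $\pi$ used for reparameterization.

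First I would solve $\max_\pi \mathcal{J}_{\beta,f}(\pi,r)$ subject to $\sum_y \pi(y|x)=1$. Writing $u = \pi(y|x)/\piref(y|x)$ and introducing a multiplier $\lambda(x)$, stationarity gives $r(x,y) + \beta f'(u) = \lambda(x)$, and for the $\alpha$-divergence generator one computes $f'(u) = (u^{\alpha-1}-1)/(1-\alpha)$, hence the reparameterized reward $r(\pi) = \lambda(x) - \beta f'(\pi/\piref)$. Substituting this back into $\mathcal{J}_{\beta,f}$ and using $\sum_y \pi(y|x)=1$ together with $\pi = \piref\,u$, the objective collapses to $\mathbb{E}_{x}[\lambda(x)] + \beta\,\mathbb{E}_{x}\sum_y \piref(y|x)\bigl(f(u)-u f'(u)\bigr)$. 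The pivotal algebraic step is the identity $f(u)-uf'(u) = (u^\alpha-1)/\alpha$, which after $\piref u^\alpha = \pi^\alpha\piref^{1-\alpha}$ turns the bonus into $\mathbb{E}_x[\lambda(x)] + \tfrac{\beta}{\alpha}\,\mathbb{E}_x\bigl[\sum_y \pi(y|x)^\alpha\piref(y|x)^{1-\alpha}-1\bigr]$.

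With this closed form the differentiation is immediate. As in the KL case, the normalization piece $\lambda(x)$ and the constant $-\beta/\alpha$ contribute only a function of $x$ to $\mathcal{L}_{\text{bonus}}$ — certified by the gauge identity $\mathcal{L}_{\text{bonus}}(r+c(x)) = \mathcal{L}_{\text{bonus}}(r) + \mathbb{E}_x[c(x)]$ — so they vanish under the pointwise cross-derivative $\partial^2/\partial\pi(y|x)\,\partial\piref(y|x)$ of Definition~\ref{def:optim}, exactly as $\log Z(x)$ dropped out in Lemma~\ref{lemma:fail-kl}. The only surviving term is $\tfrac{\beta}{\alpha}\rho(x)\,\pi(y|x)^\alpha\piref(y|x)^{1-\alpha}$, whose cross-partial is $\beta(1-\alpha)\,\rho(x)\,\pi^{\alpha-1}\piref^{-\alpha} \ge 0$ for all $0\le\alpha\le 1$. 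I would then record the consistency check: as $\alpha\to 1$ the joint term degenerates to $\sum_y\pi(y|x)=1$ and the bonus becomes $\pi$-independent, recovering the vanishing bonus of Lemma~\ref{lemma:fail-kl}, whereas for $\alpha<1$ the cross-partial is strictly positive, i.e. the bonus rewards large-$\piref$ trajectories, contradicting optimism.

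The main obstacle is the rigorous treatment of the multiplier $\lambda(x)$: it is fixed implicitly by the normalization constraint and therefore depends on the whole conditionals $\pi(\cdot|x),\piref(\cdot|x)$, so it is not literally a function of $x$ alone. The clean way to dispatch this is the gauge argument above — shifting $r$ by any $c(x)$ shifts $\mathcal{L}_{\text{bonus}}$ only by the $\pi$-independent amount $\mathbb{E}_x[c(x)]$ — which certifies that, at a fixed $(x,y)$, only the factor $\pi^\alpha\piref^{1-\alpha}$ carries joint $(\pi,\piref)$ dependence and hence is the sole contributor to the cross-partial. A secondary point to state carefully is the admissible range of $\alpha$: both the concavity of the generator and the prefactor $\alpha(1-\alpha)$ require $0\le\alpha\le 1$, which is precisely where the conclusion $\ge 0$ holds cleanly, with equality only at the KL and reverse-KL endpoints.
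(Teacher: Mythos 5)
Your proposal is correct and takes essentially the same route as the paper's proof: collapse the bilevel bonus by substituting the closed-form inner maximizer (which coincides with the outer policy), reparameterize the reward in terms of $\pi$ and $\piref$, and read off the sign of the pointwise cross-derivative, which in both treatments comes out as $\beta(1-\alpha)\pi^{\alpha-1}\piref^{-\alpha}\geq 0$ for $0\leq\alpha\leq 1$; your gauge argument for the multiplier $\lambda(x)$ is simply a more careful rendering of the paper's remark that the reparameterization ``fortunately cancels $Z(x)$''. The only slip is your closing aside that equality holds at both endpoints: the prefactor $(1-\alpha)$ vanishes only at $\alpha=1$ (reverse KL), while at $\alpha=0$ the cross-partial equals $\beta/\pi>0$, exactly as your own earlier sentence (``for $\alpha<1$ the cross-partial is strictly positive'') states.
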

\begin{proof}
For a RL objective $\mathcal{J}_{\beta, f}(\pi,r)$, the relation between the optimal policy $\pi^*_f$ and the reward $r$ can be formulated as follows,
\begin{equation}
    \pi_{f}^*(y|x) = \frac{1}{Z(x)} \piref(y|x)(f')^{-1}(r(x,y)/\beta), \quad
    r_\pi(x,y)=\beta f'(\frac{\pi^*(y|x)}{\piref(y|x)}Z(x)),
\label{eq:f-r}
\end{equation}
where $Z(x)$ is a normalization term and $(f')^{-1}$ is the inverse function of $f'$. The bi-level objective can be similarly transformed to a single level one by canceling the inner maximization $\max_\pi$ by Eq.~\ref{eq:f-r}. The single-level objective can be written as $ r_t =\arg\min_r \mathcal{L}_{BT}(\mathcal{D},r) - \kappa \mathbb{E}_{x\sim\rho,y\sim\piref} \frac{1}{Z(x)} (f')^{-1}(\frac{r(x,y)}{\beta}) \cdot r(x,y) - \beta f(\frac{1}{Z(x)}(f')^{-1}(\frac{r(x,y)}{\beta})$. Since the policy is computed by $\arg\max_\pi \mathcal{J}_{\beta,f}(\pi, r)$, the reward can be reparameterized by the policy with Eq.~\ref{eq:f-r}, which fortunately cancels $Z(x)$. Then, the optimistic reward-modeling objective can be reparameterized as
\begin{equation}
   \arg\min_\pi \mathcal{L}_{dpo}(\mathcal{D},\pi) -\kappa\beta \mathbb{E}_{x\sim\rho,y\sim\piref} \Big[  \frac{\pi(y|x)}{\piref(y|x)}f'(\frac{\pi(y|x)}{\piref(y|x)}) -\beta f(\frac{\pi(y|x)}{\piref(y|x)})\Big].
\label{eq:fail-f}
\end{equation}
Since for $\alpha$-divergence, $f(u) = \frac{u^\alpha - \alpha u- (1-\alpha)}{\alpha(\alpha-1)}$, the partial derivative of Eq.~\ref{eq:fail-f} is $(\frac{\piref}{\pi})^{1-\alpha}$, which induces positively correlated gradients w.r.t. $\pi$ and $\piref$ when $0\leq \alpha<1$, and is a constant when $\alpha=1$, hence contradictory to the optimism defined in Definition~\ref{def:optim}.
\end{proof}

According to Lemma~\ref{lemma:fail-f}, we summarize several forms of exploratory bonus induced by different $\alpha$-divergences in Table~\ref{tab:exploration-bonus-preview}. For clarity, these expressions are presented after removing constant coefficients and additive biases. In every case, the resulting bonus encourages the policy to place more probability mass on responses that the reference model already samples frequently, rather than on underexplored responses.
Now, we further prove that it actually drives $\pi$ to collapse toward $\piref$ and that the failure extends beyond $\alpha$-divergence to other $f$-divergences.

\begin{theorem}[Optimism failure beyond $\alpha$-divergence.]
When $f$ belongs to $f$-divergence, and the reward function is obtained by $\hat r = \arg\min_r [\mathcal{L}_{BT}(\mathcal{D}_t,r) -  \kappa \max_\pi \mathcal{J}_{\beta,f}(\pi,r) ]$ and the policy is updated by $\arg\max_\pi \mathcal{J}_{\beta,f}(\pi,\hat r)$, the bonus term $-  \kappa \max_\pi \mathcal{J}_{\beta,f}(\pi,r)$ induces the policy model $\pi$ to coincide with $\piref$ when $xf''(x)$ is a monotone function.
\label{thm:fail-f}
\end{theorem}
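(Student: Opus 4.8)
The plan is to recycle the reparameterization from Lemma~\ref{lemma:fail-f} and then turn the question ``which policy does the bonus drive $\pi$ toward?'' into a single pointwise stationarity equation that the hypothesis on $xf''(x)$ settles uniquely.

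First I would reduce the bi-level bonus to a single integral in $\pi$. Exactly as in the proof of Lemma~\ref{lemma:fail-f}, I eliminate the inner $\max_\pi$ using the optimal-policy relation (Eq.~\ref{eq:f-r}) and reparameterize the reward through the policy, so that the normalizer $Z(x)$ cancels. Up to a positive multiplicative constant and terms independent of the per-$y$ ratio, the bonus contribution to the reward-training objective becomes $\mathbb{E}_{x\sim\rho,\,y\sim\piref}\big[g(\pi(y|x)/\piref(y|x))\big]$ with $g(u):=uf'(u)-f(u)$. The hinge of the whole argument is the elementary identity $g'(u)=uf''(u)$, which I would record explicitly because it is precisely the quantity named in the theorem.

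Next I characterize the induced policy as the stationary point of this functional under the per-prompt normalization $\int \pi(y|x)\,\mathrm{d}y = 1$. Introducing a Lagrange multiplier $\nu(x)$ (constant in $y$) and taking the variation in $\pi(y|x)$ yields the pointwise condition $\tfrac{\pi(y|x)}{\piref(y|x)} f''\!\big(\tfrac{\pi(y|x)}{\piref(y|x)}\big)=\nu(x)$. Since $xf''(x)$ is assumed monotone, it is injective, so this equation admits at most one root in $\pi/\piref$; the ratio is therefore forced to be independent of $y$, say $u(x)$. Integrating $\pi(y|x)=u(x)\piref(y|x)$ over $y$ and using that $\pi(\cdot|x)$ and $\piref(\cdot|x)$ are both normalized gives $u(x)=1$, i.e.\ $\pi(\cdot|x)=\piref(\cdot|x)$. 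Thus the bonus's stationary policy is exactly the reference, so the bonus collapses $\pi$ onto $\piref$ rather than amplifying low-$\piref$ responses, contradicting Definition~\ref{def:optim}; the $\alpha$-divergence case of Lemma~\ref{lemma:fail-f}, where $xf''(x)$ is a constant multiple of $x^{\alpha-1}$ and hence monotone, is recovered as a special instance.

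I expect the main obstacle to be the first reduction rather than the final algebra. One must check that the bi-level objective genuinely collapses to the single-integral form $\mathbb{E}_{\piref}[g(\pi/\piref)]$ and, crucially, that the leftover additive pieces (the $\log Z(x)$ / Lagrange constant) depend only on $x$ and not on the per-$y$ ratio, so that they vanish from the variation; this requires $f$ to be a strictly convex $f$-divergence generator so that $(f')^{-1}$ in Eq.~\ref{eq:f-r} is well-defined. It is worth emphasizing that only injectivity of $xf''(x)$ is used---no sign or direction of the monotonicity is needed---which is exactly why the collapse extends uniformly across the whole $f$-divergence family.
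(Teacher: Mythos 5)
Your proposal is correct and follows essentially the same route as the paper's own proof: reduce the bonus via the Lemma~\ref{lemma:fail-f} reparameterization to $\mathbb{E}_{y\sim\piref}[h(\pi/\piref)]$ with $h(u)=uf'(u)-f(u)$, impose stationarity under the normalization constraint so that $h'(\pi/\piref)=(\pi/\piref)f''(\pi/\piref)$ is constant in $y$, invoke injectivity of $xf''(x)$ to force a constant ratio, and conclude $\pi=\piref$ by normalization. The only cosmetic difference is that the paper also carries the positivity constraints through KKT complementary slackness (restricting to the support of $\pi$), whereas you work directly with the interior stationary point, which matches the paper's final ``unique interior optimum'' conclusion.
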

The monotonic increase of $xf''(x)$ can be satisfied by a broader divergence class beyond $\alpha$-divergence, including JS-divergence and Pearson $\chi^2$. Please see the detailed proofs in Appendix~\ref{apd:f-extension}. 
{\vspace{-0.1em}}
\begin{tcolorbox}[width=\linewidth, colback=white!95!black]
\vspace{-0.3em}
\paragraph{{Intuitive understanding.}} The optimization of the exploratory bonus in Eq.~\ref{eq:r} is a min-max bi-level objective, $\min_r -\kappa \max_\pi [\mathbb{E}_{x\sim\rho,y\sim\pi} r(x,y) - \beta \mathbb{D}_\text{KL}(\pi\Vert\piref)]$.
Due to inner maximization $\max_\pi$, the divergence constraint implicitly makes $\pi$ close to $\piref$ to reduce the KL divergence. Meanwhile, outer minimization $\min_r$ forces $r$ to provide high rewards in the region of high $\pi$ to maximize the expected reward. Their combination implicitly makes $r$ focus more on the region of high $\piref$. As responses in high $\piref$ area are easily sampled from scratch, prior exploratory bonuses just concentrate sampling on regions that are already frequently visited, \emph{contradictory to the optimism principle, which requires encouraging exploration for responses $y$ \textit{rarely} sampled by the reference model.}
\vspace{-0.3em}
\end{tcolorbox}

\vspace{-0.2em}
\section{General Exploratory Bonus with Optimism Principle}
\label{sec:GEB}
\vspace{-0.2em}
Motivated by the failure of the existing optimistic exploration works, we now propose a novel framework, \emph{General Exploratory Bonus} (\textbf{GEB}), and prove that it achieves optimism. We further show that prior heuristic bonuses—and their broader variants—emerge as special cases of our formulation.

\paragraph{Formulation of a novel exploratory bonus.}
As shown in the previous section, existing bonus schemes fail because the divergence constraints in $\max_\pi \mathcal{J}_{\beta,f}(\pi,r)$ force the optimal policy $\pi$ to remain close to $\piref$, thereby biasing exploration toward regions where $\piref$ is large. Achieving optimistic exploration requires the optimal $\pi$ to counteract this effect and deviate from $\piref$. \emph{Our key idea is therefore to incorporate an additional $\piref$-dependent term into the reward that offsets the influence of the divergence regularization}.
The resulting exploratory bonus takes the form
\begin{equation} \label{eq:geb-conceptual}
    \mathcal{L}_\text{bonus}= \max_\pi J_{\beta,f}(\pi,R),
\end{equation} 
where our new reward formulation, $R$, now depends not only on the original reward model $r(x,y)$ but also on $\piref(y|x)$. Now, the optimal policy of $\max_\pi J_{\beta,f}(\pi,R(x,y))$ can be obtained by replacing $r(x,y)$ with $R(x,y)$ in Eq~\ref{eq:f-r}. This yields $\pi^{*}(y|x)=\frac{1}{Z_R(x)}
\piref (f')^{-1}(\frac{R(x,y)}{\beta})$, where $Z_R(x)$ is a normalization term.

Following Lemma~\ref{lemma:fail-f}, we substitute $\pi(y|x)$ in $\max_\pi J_{\beta,f}(\pi, R(x,y))$ with its optimal form $\pi^{*}(y|x)$ and then apply the reward reparameterization trick for $\alpha$-divergences \citep{f-div}, i.e., $r(x,y)=f'(\pi^{*}(y|x)\big/\piref(y|x))$, where $f$ specifies the divergence. Given this policy-reparameterized reward, we specify the exploratory bonus term in Eq~\ref{eq:geb-conceptual} as follows:
\begin{equation}
   \mathcal{L}_{\text{bonus}} = \beta \mathbb{E}_{x\sim\rho,y\sim\piref(\cdot|x)}\Big[ \frac{u(x,y)}{Z_R(x)}  f'(u(x,y)) -  f(\frac{u(x,y)}{Z_R(x)}) \Big].
\label{eq:explore-j}
\end{equation}

In Eq.~\ref{eq:explore-j}, we introduced $u(x,y)$ as an atomic function employed to construct the actual loss, and it is given by $u(x,y) = (f')^{-1}(R(x,y)/\beta)$ in this setup. Note that, after reward reparameterization, $u(x,y)$ can be expressed in terms of $\pi(y|x)$ and $\piref(y|x)$. Moreover, as the functional form of $R(x,y)$ is not restricted to a specific class, $u(x,y)$ can be instantiated in many ways using these two distributions, while it must satisfy $u(x,y)>0$ for all $x,y$ to ensure that the argument of $f'(\cdot)$ lies within its domain. See Table~\ref{tab:exploration-bonus} for the example entries we are considering in this work.

\paragraph{Equivalence to a practical objective.}
In our proposed exploratory bonus, the normalization term $Z_R(x)$ in Eq.~\ref{eq:explore-j} cannot be eliminated. Fortunately, Lemma~\ref{lemma:twoobj} (proved in Appendix~\ref{sec:lemma-4.1}) shows that the training objectives with and without $Z_R(x)$ are equivalent. This equivalence allows us to convert the objective into a more concise form, facilitating both analysis and practical implementation.

\begin{lemma}
Denote two objectives as $h(u(x,y))=\mathbb{E}_{x\sim\rho,y\sim\piref} u(x,y)f'(u(x,y)) -f(u(x,y))$ and $\hat h(u(x,y)) = \mathbb{E}_{x\sim\rho,y\sim\piref} \frac{u(x,y)}{Z_R(x)}f'(u) - f(\frac{u(x,y)}{Z_R(x)})$ where $u(x,y)$ is a function with $\pi(y|x)$ and $\piref(y|x)$. If the ratio
\begin{equation}
    \frac{f'(u(x,y)) + u(x,y)f''(u(x,y)) - f'(\frac{u(x,y)}{Z_R(x)})}{Z_R(x)u(x,y)f''(u(x,y))}=\Lambda(x)
    \label{eq:two-obj-condition}
\end{equation}
is independent of $y$ and $\Lambda(x)>0$, then minimizing the two objectives, $\min_\pi -h(u(x,y))$ and $\min_\pi -\hat h(u(x,y))$, yields the same class of optimal policies.
\label{lemma:twoobj}
\end{lemma}

Note that the $\alpha$-divergence ($0 \le \alpha \le 1$; see Def.~\ref{def:alpha-div}) naturally satisfies the condition in Eq.~\ref{eq:two-obj-condition} whenever $u(x,y) > \alpha$. As we show in the next paragraph, enforcing $u(x,y) > \alpha$ is straightforward in practice, which grants our framework substantial flexibility and extensibility.
Leveraging Lemma~\ref{lemma:twoobj}, we can thus rewrite our objective in Eq.~\ref{eq:explore-j} into a concise and analytically convenient form without the normalization term:

\begin{equation}
    \mathcal{L}_{\text{bonus}} =  \beta \mathbb{E}_{x\sim\rho,y\sim\piref} \Big[ u(x,y)f'(u(x,y))-f(u(x,y))\Big],
\label{eq:optim-f}
\end{equation}
where $u(x,y)$ is flexibly formulated by $\pi(y|x)$ and $\piref(y|x)$ satisfying $u(x,y)>\alpha$.

\paragraph{GEB successfully achieves optimism.}
Building on Lemma~\ref{lemma:twoobj}, we now show that our proposed framework achieves the optimism condition in Definition~\ref{def:optim} (See Appendix~\ref{apd:geb} for the proof).
\vspace{-0.25em}
\begin{theorem}
\rebuttal{
Consider an $\alpha$-divergence $f$ with $0\leq \alpha \leq 1$, and the exploratory bonus $\mathcal{L}_{\text{bonus}} =  \beta \mathbb{E}_{x\sim\rho,y\sim\piref} \Big[ u(x,y)f'(u(x,y))-f(u(x,y))\Big]$, where $u(x,y)$ is a function dependent on $\pi(y|x)$ and $\piref(y|x)$. For any $(x,y)$, if $ \frac{\partial u}{\partial \pi} + \piref\frac{\partial^2 u}{\partial \pi \partial \piref} + \frac{(\alpha-1)\piref}{u}\frac{\partial u}{\partial \pi}\frac{\partial u}{\partial \piref} <0$ and $u(x,y)>\alpha$, the optimism condition in Definition~\ref{def:optim} is satisfied; that is, $\frac{\partial^{2} \mathcal{L}_{\text{bonus}}}{\partial \pi \partial \piref} \leq 0$.}
\label{thm:optim}
\end{theorem}
\begin{table}[t!]
\caption{GEB under different divergence classes and design of $u$. \rebuttal{Note that (1) now the bonus term can be computed without the reference probability mass $\piref$; (2) all of these $u$ instantiations meet the condition $u>\alpha$ when $0<\pi<1$.} The presented bonuses are simplified by removing constants.}
    \centering
    \resizebox{\linewidth}{!}{%
    \begin{tabular}{c|c|c|c}
    \toprule
       \diagbox[width=8em]{$f$}{$\mathcal{L}_{\text{bonus}}$}{$u$}  & $1+\alpha-\pi$ &  $1/\pi$ & $ \mathrm{arctanh}(1-\pi)+\alpha$  \\
      \midrule
      reverse KL& $\mathbb{E}_{x\sim\rho,y\sim\piref} - \pi(y|x)$ & $\mathbb{E}_{x\sim\rho,y\sim\piref} \frac{1}{\pi(y|x)}$  & $\mathbb{E}_{x\sim\rho,y\sim\piref} \mathrm{arctanh}(1-\pi(y|x))$ \\
      forward KL& $\mathbb{E}_{x\sim\rho,y\sim\piref} \log(1-\pi(y|x))$ & $\mathbb{E}_{x\sim\rho,y\sim\piref} - \log \pi(y|x)$ & $\mathbb{E}_{x\sim\rho,y\sim\piref} \log \mathrm{arctanh}(1-\pi(y|x))$ \\
    Hellinger Distance & $\mathbb{E}_{x\sim\rho,y\sim\piref} \sqrt{1.5-\pi(y|x)}$& $\mathbb{E}_{x\sim\rho,y\sim\piref} \frac{1}{\sqrt{\pi(y|x)}}$  & $\mathbb{E}_{x\sim\rho,y\sim\piref} \sqrt{  \mathrm{arctanh}(1-\pi(y|x)) + 0.5}$\\
    \bottomrule
    \end{tabular}}
    \vspace{-0.5cm}
    \label{tab:exploration-bonus}
\end{table}

In our formulation, $u(x,y)$ can be flexibly defined in terms of $\pi(y|x)$ and $\piref(y|x)$ as long as it satisfies the derivative condition in Theorem~\ref{thm:optim} and $u(x,y)>\alpha$. In particular, when $u(x,y)$ depends solely on $\pi(y|x)$ and is independent of $\piref(y|x)$, \rebuttal{any function that is strictly decreasing in $\pi$ with $u(x,y)>\alpha$ constitutes a valid choice. This design flexibility underscores the extensibility of our framework.}
In Table~\ref{tab:exploration-bonus}, we list several such choices of $u$, along with their corresponding reparameterized exploratory bonus terms under three different $\alpha$-divergences. From a practical standpoint, since $\mathcal{L}_{\text{bonus}}$ is computed as an expectation over $\piref(\cdot|x)$, it does not require additional sampling and can be seamlessly integrated into iterative online RLHF. 
Meanwhile, to avoid unintended decreases in the likelihood of preferred responses, we follow \citet{avoid} and restrict the computation of the bonus on rejected responses to ensure that the probability of preferred responses continues to increase.

\vspace{-0.2em}
\paragraph{Prior exploratory bonuses are encompassed within GEB.}

Although we have shown that existing theoretical formulations of $\mathcal{L}_{\text{bonus}}$ fail to guarantee optimism, many practical implementations have nevertheless been effective through various approximations and adaptations. These approximations and adaptations are generally inextensible beyond the reverse KL divergence (detailed in Appendix~\ref{apd:f-generalize}). In this paragraph, we show that these practical implementations can be naturally subsumed into our GEB framework, and even broader objectives can be reinterpreted as instances of optimistic exploration. For example, \citet{selm} and \citet{xpo} finally implement their exploratory bonus as $\kappa \mathbb{E}_{x\sim\rho,y\sim\piref(y|x)} \log \pi(y|x)$, which belongs to GEB when $u = -\log \pi +1$ and $f$ is KL-divergence. Similarly, \citet{vpo} implement the exploratory bonus as $\kappa \mathbb{E}_{x\sim\rho,y\sim\pi_\text{cal}(\cdot|x)} \log \frac{\pi}{\piref}$ where $\pi_\text{cal}$ is a fixed calibration distribution. 
This also falls under GEB by setting $u=-\frac{\pi_\text{cal}}{\piref} \log \frac{\pi}{\piref} - \frac{\pi_\text{cal}}{\piref} \log {\piref} + 1$ and $f$ is KL-divergence.  
Interestingly, even objectives not explicitly designed for exploration can be reinterpreted through our GEB framework. For instance, \citet{avoid} augment the DPO loss with an additional term $\kappa \mathbb{E}_{x,y\sim\pi_{ref}}\sigma(-\beta\log\frac{\pi(y|x)}{\piref(y|x)})$, which was originally introduced to control sample complexity. In our framework, this corresponds to optimistic exploration with $ u=-\sigma(-\beta\log\frac{\pi(y|x)}{\piref(y|x)})+1$.
\begin{tcolorbox}[width=\linewidth, colback=white!95!black]
\vspace{-0.55em}
\paragraph{\rebuttal{Takeaways.}} \rebuttal{
In summary, we introduce a general formulation of the exploratory bonus term, GEB, which showcases the following unique strengths: (1) In contrast to the prior bonus terms, GEB meets the optimism condition theoretically, thus provably boosts exploration on relatively untapped region; (2) GEB spans a broad class of instantiations as shown in Table~\ref {tab:exploration-bonus}, specified by the combination of the $\alpha$-divergence class and the functional form of $u$, which provides a plenty options for practitioners to choose on demand; (3) GEB is practical as it does not incur additional sampling costs and can be seamlessly integrated into the existing iterative online RLHF framework (Appendix~\ref{sec:apd:implement}); and (4) GEB offers an unified understanding of existing methods by generalizing them as special cases in a single flexible formulation.
}
\vspace{-0.55em}
\end{tcolorbox}

\section{Experiments}
\subsection{Experimental Settings}
Following prior works \citep{selm,xpo,avoid}, we adopt the same iterative online algorithm as in Algorithm~\ref{alg:iterative} with three iterations, aiming to isolate the effects of different exploration bonuses. 
We adopt two LLM backbones: Llama-3-8B-SFT \citep{rlhflow} following prior works, and Mistral-Instruct-v0.3 \citep{mistral}. The training prompt set is RLHFlow-UltraFeedback \citep{rlhflow} as in previous works. URM-LLaMa-3.1-8B \citep{urm} serves as the preference oracle.
We evaluate the outcome policies on both in-domain and out-of-domain test sets. Specifically, for the in-domain test, we use a held-out test set from UltraFeedback \citep{ultrafeedback}, and sample 64 times per prompt with the outcome policy to compare the average reward and win-rates against the base model. 
We use length-controlled AlpacaEval2 benchmark~\citep{alpaca} with GPT-4 as a judge for out-of-domain alignment test, and MATH-500 \citep{math} to evaluate out-of-domain reasoning ability. 

\paragraph{Baselines.}
We adopt f-DPO~\citep{f-div}, which extends DPO to the f-divergence class, as the primary baseline. We further compare GEB with three optimistic-exploration methods that incorporate exploratory bonuses—SELM~\citep{selm}, XPO~\citep{xpo}, and VPO~\citep{vpo}. Since the approximations or adaptations in their implementations do not extend beyond the KL divergence, we report their results only under KL. In contrast, we introduce a new baseline, Failed Exploratory Bonus (FEB), which removes these approximations or adaptations, i.e., Eq.~\ref{eq:feb}.

\subsection{Results \& analyses}

\paragraph{GEB delivers robust improvements across different loss designs, divergence classes, and language model backbones.}
The experimental results are shown in Table~\ref{tab:in-domain}. Across both backbones, GEB generally outperforms f-DPO and FEB. Under the KL-divergence, GEB displays better or at least on-par performance compared to prior exploratory-bonus methods. Notably, the win-rate increases over 1.82\% and 0.94\% under the KL-divergence, over 2.36\% and 1.29\% under the Hellinger Distance, compared with their f-DPO counterpart. GPT-4 evaluation on the Alpaca benchmark also shows consistent performance gains on the out-of-domain alignment task. While GEB maintains on par, or usually better results in MATH, showing less performance degradation beyond alignment, known as alignment tax \citep{tax-1,tax-2}.

\begin{figure}[t!]
    \centering
    \includegraphics[width=0.32\linewidth]{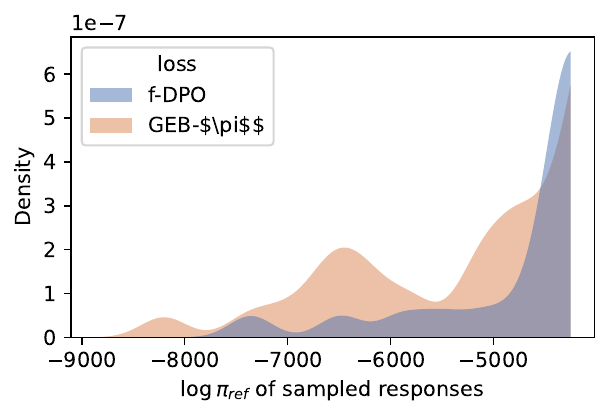}
    \includegraphics[width=0.32\linewidth]{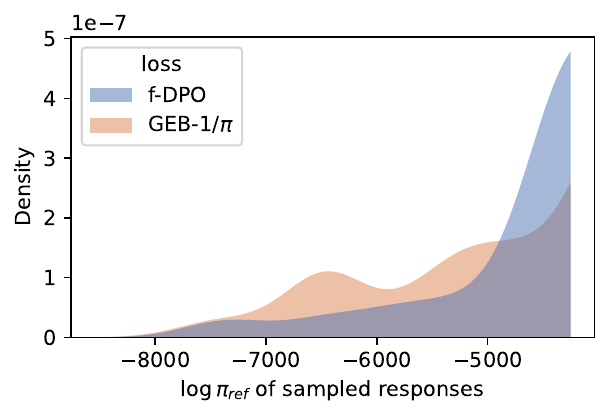}
    \includegraphics[width=0.32\linewidth]{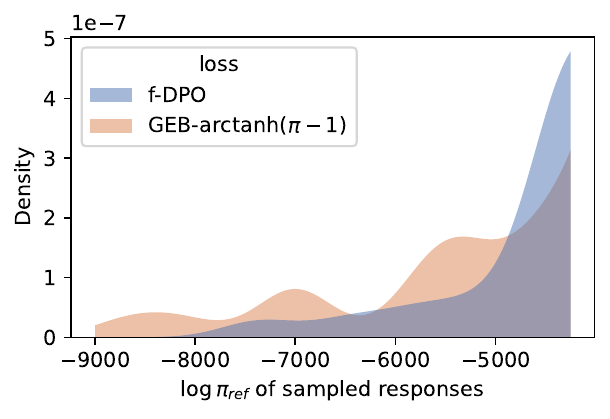}
    \vspace{-0.75em}
    \caption{Comparison of $\log \piref$ of sampled response in the last iteration between the general exploratory bonuses and vanilla iterative DPO. GEB-$\pi$, GEB-$1/\pi$, and GEB-$\mathrm{arctanh}(\pi-1)$ corresponds to $1+\alpha-\pi$, $1/\pi$, and $\mathrm{arctanh}(1-\pi)+\alpha$ as in Table~\ref{tab:exploration-bonus}}
    \label{fig:placeholder}
\end{figure}

\paragraph{GEB effectively encourages exploration in small $\piref$ region, yielding more diverse sampling.}
In Figure~\ref{fig:placeholder}, we visualize the distribution of $\log \piref$ for sampled responses in the last iteration under the KL divergence. When trained with the GEB, the policy model consistently samples more trajectories with a smaller $\piref$ compared to the policy trained by f-DPO loss. This validates our motivation that GEB can encourage sampling trajectories of small $\pi_\text{ref}$ for optimistic exploration. In Table~\ref{tab:placeholder}, we further calculate the distinct-n ($n=1,2,3,4$) for the sampled responses in the last iterations under the KL divergence, which measures the diversity of a corpus. GEB generally has higher diversity scores, validating that GEB incentivizes qualitatively more diverse samples.


\begin{table}[t!]
\vspace{-1em}
\caption{In-domain evaluation on different exploration bonuses. \textbf{Boldface} and \underline{underline} indicate the best and the second-best results, respectively. GEB-$\pi$, GEB-$1/\pi$, and GEB-$\mathrm{arctanh}(\pi-1)$ corresponds to $1+\alpha-\pi$, $1/\pi$, and $\mathrm{arctanh}(1-\pi)+\alpha$ as in Table~\ref{tab:exploration-bonus}.}
\centering
\scalebox{0.95}{
    \begin{tabular}{c|cccccc|cc}
    \toprule
           
           &  \multicolumn{2}{c}{KL ($\alpha$=1)} &   \multicolumn{2}{c}{Hel. ($\alpha$=0.5)}&\multicolumn{2}{c|}{f-KL ($\alpha$=0)} &\multicolumn{2}{c}{Avg.}\\ 
           
           & WR& AvgR&WR& AvgR&WR& AvgR&WR& AvgR \\
           \midrule
\multicolumn{9}{c}{\textit{Mistral-Instruct-v0.3}}\\
\midrule

    f-DPO & 78.42&0.7480&72.69&0.6536&51.11&0.5918& 67.40&0.6645\\
    SELM &77.56 &0.7530 &-&-&-&-&-&-\\
     XPO &79.71 &0.7492 &-&-&-&-&-&-\\
     VPO &78.57 &0.7426 &-&-&-&-&-&-\\
    FEB &78.42&0.7480& 71.54& 0.6525 &47.53&0.5928&65.83&0.6644\\
\midrule   
GEB-$\pi$ &\textbf{81.00} &0.7542 &\underline{75.48}&\textbf{0.6641}&51.68&0.5976&\textbf{69.39}&\underline{0.6720}\\
    
GEB-$1/\pi$ &\underline{80.00}&\underline{0.7554}&73.97&0.6541&\underline{52.26}&\textbf{0.6051}&68.74&0.6715\\

GEB-$\mathrm{arctanh}(\pi-1)$ & 79.71&\textbf{0.7559}&\textbf{75.69}&\underline{0.6614}&\textbf{52.76}&\underline{0.5989}&\textbf{69.39}&\textbf{0.6721}\\
           \midrule
\multicolumn{9}{c}{\textit{LLaMA-3-8B-SFT}}\\
\midrule
    
    f-DPO & 73.11& 0.8050&71.11&\underline{0.7859}&67.38	&0.7579& 70.53 &0.7829\\
    SELM & 74.19&\underline{0.8126}&-&-&-&-&-&-\\
     XPO & 72.40& 0.8119&-&-&-&-&-&-\\
     VPO & 71.61 & 0.7971&-&-&-&-&-&-\\
    FEB & 73.11&0.8050& 68.17& 0.7591& 67.95&	0.7611& 69.74&0.7751\\
\midrule   
GEB-$\pi$ & {74.34} & \textbf{0.8156}&71.68&	0.7840 &67.67&\textbf{0.7681}&71.23&\textbf{0.7892} \\
    
GEB-$1/\pi$ &\underline{74.76}& 0.8102&\underline{72.25} &\underline{0.7859}& \underline{68.17} & \underline{0.7591}&\underline{71.73} &\underline{0.7851}\\

GEB-$\mathrm{arctanh}(\pi-1)$ &\textbf{ 74.98}& 0.8080&\textbf{73.26}&\textbf{0.7877} & \textbf{68.89}&0.7569& \textbf{72.38}&0.7842\\

    \bottomrule
    \end{tabular}}
    \label{tab:in-domain}
\end{table}


\paragraph{The choice of $\kappa$.}
Since the formulation of $u$ in Eq.~\ref{eq:explore-j} is flexible, the scale of the GEB term can differ substantially across designs, hence the absolute value of the bonus is less informative. Instead, we examine the relative ratio of the bonus term to the vanilla RL loss $|\kappa \mathcal{L}_{bonus}|/|\mathcal{L}_{RL}|$, which provides a more consistent basis for comparison and offers better practical guidance for tuning $\kappa$ across diverse settings.
As shown in Fig.~\ref{fig:ablation}, performance remains stable when the ratio lies within a suitable range (1e-2 to 1e-6 in our case). However, if the ratio is too large, it impedes optimization of the RL objective and degrades performance; if too small, the exploration incentive in uncertain regions diminishes, and performance reverts to the vanilla baseline.

\begin{table}[t!]
\vspace{-1em}
\caption{Out-of-domain evaluation on different exploration bonuses with LLaMA-3-8B-SFT. \textbf{Boldface} and \underline{underline} indicate the best and the second-best results, respectively. GEB-$\pi$, GEB-$1/\pi$, and GEB-$\mathrm{arctanh}(\pi-1)$ corresponds to $1+\alpha-\pi$, $1/\pi$, and $\mathrm{arctanh}(1-\pi)+\alpha$ as in Table~\ref{tab:exploration-bonus}.}
    \label{tab:ood}
\centering
\scalebox{0.95}{
    \centering
    \begin{tabular}{c|cccccc|cc}
    \toprule
           &  \multicolumn{2}{c}{KL($\alpha$=1)} &   \multicolumn{2}{c}{Hel.($\alpha$=0.5)}&\multicolumn{2}{c|}{f-KL($\alpha$=0)}&\multicolumn{2}{c}{Avg.} \\ 
           &Alpaca & Math & Alpaca & Math&Alpaca  & Math&Alpaca  & Math \\
           \midrule
    
    f-DPO & 25.72&67.6&24.73 &69.0&17.80 &\underline{69.2}& 22.75&68.6  \\
    FEB &25.72  &67.6&23.75 &68.6& 19.62&68.6& 23.03&68.3 \\
\midrule   
GEB-$\pi$ &\textbf{28.27} &\underline{69.2}&\underline{25.87} &\underline{69.6}&\textbf{20.05} &\textbf{71.6}&\textbf{24.73}&\textbf{70.1} \\
    
GEB-$1/\pi$ & \underline{26.10}&68.4&25.28 &\textbf{70.2}&\underline{19.80} &\underline{69.2}&\underline{23.73}&\underline{69.3}\\

GEB-$\mathrm{arctanh}(\pi-1)$ &{24.90} &\textbf{71.0}& \textbf{25.96} &67.6&19.62 &\underline{69.2}&{23.49}& \underline{69.3} \\
    \bottomrule
    \end{tabular}}
    \vspace{-0.5em}
\end{table}

\begin{figure}[t!]
    \centering
    \includegraphics[width=0.32\linewidth]{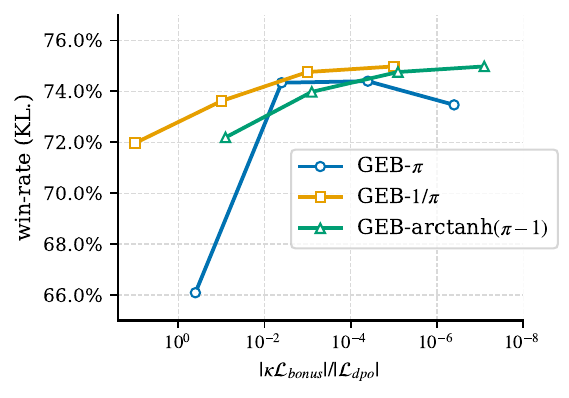}
    \includegraphics[width=0.32\linewidth]{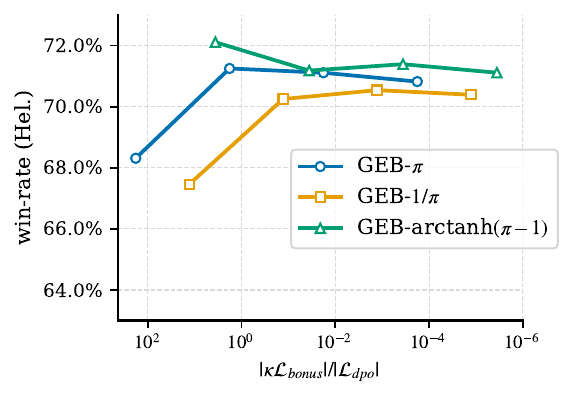}
    \includegraphics[width=0.32\linewidth]{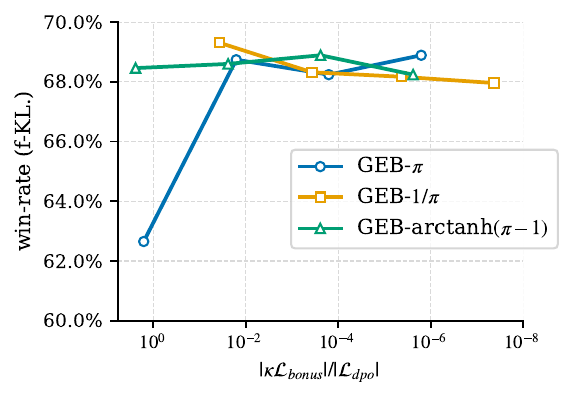}
    \vspace{-1em}
    \caption{Experiments with different $\kappa$. The three graphs are under KL divergence, Hellinger Distance, and forward KL divergence from left to right, respectively. The p, f, tanh in the legends correspond to $1+\alpha-\pi$, $1/\pi$, $\mathrm{arctanh}(1-\pi)+\alpha$ in Table~\ref{tab:exploration-bonus} respectively.}
    \vspace{-1.5em}
    \label{fig:ablation}
\end{figure}

\begin{table}[t!]
\caption{Dist-n of the sampled corpus in the last iteration under the KL divergence.} 
    \centering
    \small
    \begin{tabular}{c|cccccc}
    \toprule 
         & dist-1 & dist-2 &dist-3& dist-4 \\
        \midrule
        f-DPO&0.0189& 0.2700&0.6349&0.8418&\\
       GEB-$\pi$ &0.0192& 0.2694&0.6323&0.8420\\
       GEB-$1/\pi$&0.0191& 0.2738&0.6401& 0.8448\\
       GEB-$\mathrm{arctanh}(\pi-1)$&0.0192& 0.2730&0.6391&0.8447\\
       \bottomrule
    \end{tabular}
    \label{tab:placeholder}
\end{table}

\begin{table}[t!]
    \centering
    \small
     \caption{The performance (Pass@16) of DPO and three GEB variants on math reasoning tasks.}
    \label{tab:apd:math}
    \begin{tabular}{c|ccccccc}
    \toprule
         & MATH500 & \makecell[c]{OlympiadBench} &\makecell[c]{AIME 2025} & Avg. \\
         \midrule
      DPO   &  89.80& 57.78&23.23 &56.94 \\
      GEB-$\pi$ &  92.80& 64.59& 28.23 & 61.87  \\
      GEB-$1/\pi$ & 93.00& 65.78&29.48 &  62.75  \\
      GEB-$\text{arctanh}\pi$ &92.80& 64.59& 29.38& 62.26 \\
      \bottomrule
    \end{tabular}
    \vspace{-0.8em}
\end{table}

\paragraph{GEB improves exploration in challenging problem instances.} 
To further illustrate when GEB offers the greatest benefit, we conduct an additional experiment targeting prompts whose preferred (i.e., correct) answers lie in regions rarely explored by the reference model during sampling.
Starting from the DAPO-MATH-17K dataset \citep{dapo}, we filter out prompts for which the reference model, Qwen2.5-7B \citep{qwen2.5}, can generate a correct answer within two samples at temperature 0.6. This removes 5K prompts whose correct answers fall in frequently visited regions, leaving 12K prompts whose correct answers are less likely to be sampled. We use an open-source verifier, rather than reward models, to determine correctness. Each preference pair consists of one correct response and one wrong response. We set the learning rate to 5e-7 with a warm-up ratio of 0.1. For the reported runs under KL divergence, the $\kappa$ values for the three GEB variants are 50, 5e-2, and 5e-6, respectively.

As shown in Table~\ref{tab:apd:math}, all three GEB variants consistently surpass the DPO baseline with a clear and stable performance margin. Notably, the performance gain of GEB is more than \textbf{6}\% on OlympiadBench and AIME 2025. It indicates that our GEB can significantly enhance the exploration under scenarios where the correct answers lie in a region with a small $\piref$, i.e., an underexplored region. An additional experiment is provided in Appendix~\ref{apd:sec:bandit}.

\section{Conclusion}
While recent work proposes exploratory bonuses to operationalize the ``optimism in the face of uncertainty" principle, our work shows that the existing theoretical frameworks of exploratory bonuses fail under KL and $\alpha$-divergence regularization.
To address prior theoretical pitfalls, we introduce General Exploratory Bonus (GEB), a novel theoretical framework for sample-efficient RLHF. Our approach provably satisfies the optimism principle and unifies prior heuristic bonuses.
We empirically validate GEB on LLM alignment tasks with diverse bonus designs and LLM backbones, showing improved performance and broad applicability across multiple divergence families.

\section*{Acknowledgement}
We thank Froilan Choi and Shawn Im for their valuable suggestions on the paper, and thank all the ICLR 2026 PCs, SACs, ACs, and anonymous
reviewers. This work is supported in part by the AFOSR Young Investigator Program under award number FA9550-23-1-0184, National Science Foundation under awards IIS-2237037 and IIS-2331669, Office of Naval Research under grant number N00014-23-1-2643, Schmidt Sciences Foundation, Open Philanthropy, Alfred P. Sloan Fellowship, and gifts from Google and Amazon.

\section*{Reproducibility statement}
We have included all implementation details, hyperparameters, and training procedures in the paper and appendix. Our code and scripts for reproducing the experiments are publicly available through GitHub.

\section*{Ethics statement}
This work studies reinforcement learning from human feedback (RLHF) using only publicly available or synthetic data, without new human subject collection. Here, by providing a rigorous theoretical framework with strong empirical evidence, we pursue a high standard of scientific excellence. We also take into account inclusiveness to make all our visualizations accessible to the unprivileged group of people by producing figures distinguished by light, shade, and marker. While RLHF has the potential to amplify biases or harmful behaviors if misused, our work is intended solely to advance safe and responsible research, and we encourage its application in alignment with ethical standards.

\bibliography{iclr2026_conference}
\bibliographystyle{iclr2026_conference}

\newpage
\appendix
\section{Related Works} \label{apdx:sec:related_works}

\paragraph{Alignment \& RLHF.}
Alignment \citep{align1,align2,x2,x1,align3} aims to ensure AI systems act in accordance with human values, preferences, and goals; and it has become a critical field in AI research. To steer language models to match human preferences, Reinforcement Learning from Human Feedback (RLHF) \citep{rlhf1,rlhf2} achieves great success and has become the standard alignment pipeline. However, its computational complexity has motivated a family of Direct Preference Optimization (DPO) \citep{dpo,ipo,kto} that forgoes explicit reward modeling. Despite their efficiency, recent researchers \citep{onlinestrength-onpolicy,onlinestrength-superior,iterative-rlhf1} reemphasize the significance of online sampling. 

\paragraph{Optimistic exploration of RLHF.}
To address the computational overheads of passive exploration in RLHF, which samples trajectories just based on randomness, some existing attempts have been devoted to sample-efficient RL algorithms. Most of the works~\citep{prompt-1,prompt-2,human-1,human-2,activepreference} adhere to the principle of optimism, proposing specialized prompt or response selection strategies to emphasize uncertain samples. While some research~\citep{sea,urm} propose uncertainty-aware reward models with epistemic neural networks or bootstrap ensembles, these methods introduce additional cost.  
Some research also addresses the sample efficiency with different theoretical foundations, such as information theory~\citep{sample-information}, preference-incentive exploration~\citep{avoid}. 
Notably, several works~\citep{selm,xpo,vpo} introduce different exploratory bonuses, which can implement optimism toward uncertainty without additional computations. However, they only focus on KL-divergence and their theoretical framework cannot result in real optimism as shown in Section~\ref{sec:fail}.

\paragraph{Efficient RL for LLM.}
Beyond optimistic exploration, some research proposes fine-grained signals for RL learning. For instance, several studies propose segment-level~\citep{seg-1,seg-2} or token-level~\citep{token-1,token-2} reward functions for alignment or text control. Notably, for reasoning tasks, the process reward model~\citep{prm-1,prm-2,prm-3}, which provides step-wise feedback for solutions, has shown promising effectiveness.
On the other hand, recent research~\citep{entropy-1,entropy-2,entropy-3} on LLM reasoning reveals that high-entropy tokens guide the model toward diverse reasoning paths. Training with only high-entropy tokens is more beneficial for reasoning performance~\citep{entropy-4}.
While our approach is highly extensible, we believe the orthogonal methods can be further incorporated with our general exploratory bonus.

\section{Optimism Failure of previous works} \label{apdx:sec:failure_of_prev}

\subsection{Optimism failure under KL-divergence}
We start by proving how the existing exploratory bonus term under KL-divergence instantiation fails to achieve optimistic exploration. 
\label{apd:kl_failure}
\paragraph{Lemma 3.1}
\textit{Let $r_1 = \arg\min_r \mathcal{L}_{BT}(\mathcal{D},r)$ be a reward model trained with the vanilla BT loss, and let $r_2 = \arg\min_r [\mathcal{L}_{BT}(\mathcal{D},r) - \kappa \max_\pi \mathcal{J}_{\beta,\text{KL}}(\pi,r) ]$ be a reward model trained with an additional exploratory bonus. If the policy is optimized via Eq.~\ref{eq:policy}, then $r_1$ and $r_2$ yield the same set of policies.}

\begin{proof}
First, the inner maximization of the bonus term admits a closed-form solution, $\pi^*(y|x)  = \piref(y|x)e^\frac{r(x,y)}{\beta}/Z(x)$ where $Z(x)=\mathbb{E}_{y\sim\piref(\cdot|x)} e^\frac{r(x,y)}{\beta}$ is a normalization term. Substituting this solution for the bi-level objective of $r_2$ reduces it to a single-level form:
\begin{equation}
  r_2 = \arg\min_r \big[ \mathcal{L}_{BT}(\mathcal{D},r) - \kappa \mathbb{E}_{x\sim\rho} \beta \log \mathbb{E}_{y\sim \piref}e^\frac{r(x,y)}{\beta} ].
\label{eq:bilevel-1}
\end{equation}
As shown in \citet{dpo}, the log-ratio $\beta\log \pi_\theta(y|x) -  \beta \log \piref(y|x)$ represents the same class of the original reward function $r$ through Eq.~\ref{eq:policy}, thus all $r$ in the reward modeling objectives can be reparameterized by the log-ratio. 
Plugging this into Eq.~\ref{eq:bilevel-1} yields
\begin{align}
     \arg\min_\pi \mathcal{L}_{dpo}(\mathcal{D},\pi) - \kappa \mathbb{E}_{x\sim\rho} \beta \log\mathbb{E}_{y\sim\piref(\cdot|x)} \frac{\pi(y|x)}{\piref(y|x)}.
     \label{eq:feb}
\end{align}
Since the second term equals $0$, the reparameterized Eq.~\ref{eq:bilevel-1} is exactly the vanilla DPO loss, that is, the reparameterized training objective of $r_1$. Thus, the exploratory bonus in the reward training objective does not affect the final policy set.
\end{proof}

\subsection{Extension beyond $\alpha$-divergence}
\label{apd:f-extension}
The following theorem formally proves that the exploratory bonus $-  \kappa \max_\pi \mathcal{J}_{\beta,f}(\pi,r)$ cannot encourage optimism for a more general divergence class.

\paragraph{Theorem 3.3}
\textit{When $f$ belongs to $f$-divergence, the reward function is obtained via $\hat r = \arg\min_r [\mathcal{L}_{BT}(\mathcal{D}_t,r) -  \kappa \max_\pi \mathcal{J}_{\beta,f}(\pi,r) ]$, and the policy is updated by $\arg\max_\pi \mathcal{J}_{\beta,f}(\pi,\hat r)$, the bonus term $-  \kappa \max_\pi \mathcal{J}_{\beta,f}(\pi,r)$ induces the policy model $\pi$ to coincide with $\piref$ when $xf''(x)$ is a monotone function.}

\begin{proof}
By Lemma~\ref{lemma:fail-f}, we can reparameterize the bonus term used for optimistic reward modeling into Eq.~\ref{eq:f-r}. 
Denote $h(u)=uf'(u)-f(u)$. For a fixed prompt $x$, the training step can then be written as the following constrained optimization problem:
\begin{equation}
    \arg\max_{\pi} \mathbb{E}_{y\sim\piref(\cdot|x)} h(\frac{\pi(y|x)}{\piref(y|x)}) \quad s.t.\quad \sum_y \pi(y|x)=1 \quad \textrm{and}\quad  \forall y, \pi(y|x)>0.
\end{equation}
Then we can apply the Lagrange multiplier as
\begin{equation}
    \mathcal{L} = \mathbb{E}_{y\sim\piref} h(\frac{\pi(y|x)}{\piref(y|x)}) - \mu \Big(\sum_y \pi(y|x) -1\Big) - \sum_y \eta(y) \pi(y|x) ,
    \label{eq:lagrange}
\end{equation}
where $\mu,\eta$ are the dual variables. Then we utilize the Karush-Kuhn-Tucker (KKT) conditions for the given optimization problem. 
The complementary slackness gives that $\forall y, \eta(y) \pi(y|x)=0$.
The stationary condition requires
\begin{equation}
   \frac{\partial \mathcal{L}}{\partial \pi(y|x)}= h'(\frac{\pi(y|x)}{\piref(y|x)}) - \mu - \eta(y) = 0.
\end{equation}
Let $S_y =  \{y|\pi(y|x)>0\}$. For all $y\in S_y$, complementary slackness implies $\eta(y)=0$. Since $h'(u) = uf''(u)$ is a monotone function, we can obtain $\forall y\in S_y,  \frac{\pi(y|x)}{\piref(y|x)}$ is a constant. Then applying the normalization constraint, $\mathbb{E}_{y\sim \piref(\cdot|x)} \frac{\pi(y|x)}{\piref(y|x)} = 1$. Hence, we conclude that the unique interior optimum is $\pi^*(y|x)=\piref(y|x)$.
\end{proof}
The theorem implies that the reparameterized exploratory bonus attains its maximum only when $\pi$ and $\piref$ coincide. 
The condition that $xf''(x)$ is a monotone function is satisfied by $\alpha$-divergence and beyond, e.g. Pearson $\chi^2$.
Hence, the exploratory bonus $-\kappa\max_\pi \mathcal{J}_{\beta,f}(\pi,r)$ in the reward training objective generally contradicts the optimism, since it cannot encourage trajectories with small initialized possibility. 

\subsection{Prior adaptions \& approximations cannot generalize}
\label{apd:f-generalize}

Although the theoretical justification for the previously proposed exploratory bonus does not hold, its empirically implemented loss remains effective due to various adaptations and approximations. In this subsection, however, we show that these adaptations and approximations cannot be extended beyond the KL-divergence class.

\citet{selm} modify the formulation of $\mathcal{J}_{\beta,f}(\pi,r)$ in the optimistic exploratory bonus $-\kappa\max_\pi \mathcal{J}_{\beta,f}(\pi,r)$ as
\begin{equation}
    \mathcal{J}'_{\beta,f}(\pi,r) = E_{x,y\sim\pi,y'\sim\piref}[r(x,y)-r(x,y')]-\beta D_{KL}(\pi\vert\piref),
\end{equation}
which adds a bias in the reward expectation term. Under KL-divergence, the original $\mathcal{J}_{\beta,f}(\pi,r)$ will be zero after re-parameterization as shown in Lemma~\ref{lemma:fail-kl}, thus the sole reparameterized bias term will remain as $-\mathbb{E}_{y'\sim\piref} \log \pi(y'|x)$. Since $\mathcal{J}_{\beta,f}(\pi,r)$ cannot be reparameterized to zero except KL-divergence, this adaptation cannot generalize then.

In the derivations of \citet{vpo}, an idealized calibration distribution $\pi_{cal}$ is assumed to satisfy $\mathbb{E}_{y\sim\pi_{cal}} r(x,y)=0$. Since $\pi_{cal}$ is not directly accessible in practice, the method substitutes rejected responses to approximate expectations under $\mathbb{E}_{\pi_{cal}}$. These rejected samples, however, do not satisfy the defining property of $\pi_{cal}$, making the approximation theoretically inconsistent.

The theoretical framework of \citet{xpo} is based on Implicit Q-Approximation (refer to Lemma C.3 in the original paper) as follows,
\begin{equation}
    \beta \frac{\log \pi(y|x)}{\log \piref(y|x)} = r(x,y) - V^*(x)
\end{equation}
where $V(x)^*$ is the KL-regularized value function. Because this relation depends critically on the logarithmic structure of the KL divergence, the framework cannot be extended to more general divergence families either.

In contrast, our general exploratory bonus integrates seamlessly with iterative online RLHF algorithms and extends naturally to the entire $\alpha$-divergence family. As discussed in \S\ref{sec:GEB}, all these heuristic bonus terms can be encompassed by our unified theoretical framework.   

\subsection{Equivalence between a sophisticated objective and a simple one.}
\label{sec:lemma-4.1}

\paragraph{Lemma 4.1} 
\textit{
Denote two objectives as $h(u(x,y))=\mathbb{E}_{x\sim\rho,y\sim\piref} u(x,y)f'(u(x,y)) -f(u(x,y))$ and $\hat h(u(x,y)) = \mathbb{E}_{x\sim\rho,y\sim\piref} \frac{u(x,y)}{Z(x)}f'(u) - f(\frac{u(x,y)}{Z(x)})$ where $u(x,y)$ is a function with $\pi(y|x)$ and $\piref(y|x)$. If the ratio
\begin{equation}
    \frac{f'(u(x,y)) + u(x,y)f''(u(x,y)) - f'(\frac{u(x,y)}{Z(x)})}{Z(x)u(x,y)f''(u(x,y))}=\Lambda(x)
    \label{apd:eq:two-obj-condition}
\end{equation}
is independent of $y$ and $\Lambda(x)>0$, then minimizing the two objectives, $\min_\pi -h(u(x,y))$ and $\min_\pi -\hat h(u(x,y))$, yields the same class of optimal policies.
}

\begin{proof}
Similar to Lemma~\ref{lemma:twoobj}, for a fixed $x$, we can write a similar formulation of the Lagrange multipliers as follows,
\begin{equation}
    \mathcal{L} = \mathbb{E}_{y\sim\piref} h(u(x,y)) - \mu_1(\sum_y \pi(y|x) - 1) - \sum_y \eta_1 \pi(y|x) ,
    \label{eq:lemma4-1-proof}
\end{equation}
where $\mu,\eta$ are the dual variables. Then, we utilize the KKT conditions formulation for the given optimization problem. Similar to Lemma~\ref{lemma:twoobj}, when $\pi(x,y)>0$, we obtain
\begin{align}
    \frac{\partial h}{\partial \pi(y|x)} &= u(x,y)f''(u(x,y)) \cdot \piref(y|x)\cdot  \frac{ \partial u(x,y)}{\partial \pi(y|x)}  = \mu_1. 
    \label{eq:mu1}
\end{align}
Similarly, we can obtain the KKT conditions for $\hat h(u(x,y))$ as follows,
\begin{equation}
\frac{\partial \hat h}{\partial \pi(y|x)} = \frac{1}{Z(x)}(f'(u(x,y)) + u(x,y)f''(u(x,y)) - f'(\frac{u(x,y)}{Z(x)})) \cdot \piref(y|x)\cdot  \frac{ \partial u(x,y)}{\partial \pi(y|x)}  = \mu_2    ,  
\label{eq:mu2}
\end{equation}
where $\mu_2$ is another dual variable. With the condition of Eq.~\ref{eq:lemma4-1-proof}, these two partial derivatives in Eq.~\ref{eq:mu1} and Eq.~\ref{eq:mu2} are equivalent when $\mu_2 = \mu_1 \Lambda(x)$. Hence, every policy that satisfies the stationary condition for $h$ also satisfies it for $\hat h$. Since $\Lambda(x)>0$, the second-order derivative $\frac{\partial^2 h}{\partial^2 \pi(y|x)}$ and $\frac{\partial^2 \hat h}{\partial^2 \pi(y|x)}$ have the same sign, which indicates they share the same local minima. Hence, minimizing the two objectives $\min_\pi - h(u)$ and $\min_\pi - \hat h(u)$ induces the same class of policies.
    
\end{proof}

\subsection{GEB enables optimistic exploration} \label{apd:geb}
In this subsection, we prove how GEB meets the optimism principle specified by Definition~\ref{def:optim}.

\paragraph{Theorem 4.2} 
\textit{Consider an $\alpha$-divergence $f$ with $0\leq \alpha \leq 1$, and the exploratory bonus $\mathcal{L}_{\text{bonus}} =  \beta \mathbb{E}_{x\sim\rho,y\sim\piref} \Big[ u(x,y)f'(u(x,y))-f(u(x,y))\Big]$, where $u(x,y)$ is a function dependent on $\pi(y|x)$ and $\piref(y|x)$. For any $(x,y)$, if $ \frac{\partial u}{\partial \pi} + \piref\frac{\partial^2 u}{\partial \pi \partial \piref} + \frac{(\alpha-1)\piref}{u}\frac{\partial u}{\partial \pi}\frac{\partial u}{\partial \piref} <0$ and $u(x,y)>\alpha$, the optimism condition in Definition~\ref{def:optim} is satisfied; that is, $\frac{\partial^{2} \mathcal{L}_{\text{bonus}}}{\partial \pi \partial \piref} \leq 0$.}

\begin{proof}
First, substituting the optimal solution of the inner maximization and utilizing reward reparameterization, we obtain the training objective as in Eq.~\ref{eq:explore-j}. By Lemma~\ref{lemma:twoobj}, this can be equivalently expressed as 
\begin{equation}
    \mathcal{L}_{\text{bonus}} =  \beta \mathbb{E}_{x\sim\rho,y\sim\piref(\cdot|x)} \Big[ u(x,y)f'(u(x,y))-f(u(x,y))\Big].
\end{equation}
For $\alpha$-divergences, the conditions in Lemma~\ref{lemma:twoobj} are satisfied. Since we have $f'(u)+uf''(u) - f'(\frac{u}{Z}) = u^{\alpha-1}(Z^{1-\alpha} - \alpha)/(1-\alpha)$ and $uf''(u)=u^{\alpha-1}$, the fraction $\Lambda(x)=(Z^{1-\alpha} - \alpha)/Z(1-\alpha)$ in Lemma~\ref{lemma:twoobj} is independent of $y$. Since $u>\alpha$, $Z_R=\mathbb{E}_{y\sim\piref(\cdot|x)}u>0$, thus $\Lambda(x)>0$ is also satisfied. 
Finally, the mixed second-order derivative of Eq.~\ref{eq:optim-f} is computed as 
\begin{equation}
    \frac{\partial^2\mathcal{L}_{\text{bonus}}}{\partial \pi \partial \piref} =\beta \mathbb{E}_{x\sim\rho}\sum_y u^{\alpha-1}(\frac{\partial u}{\partial \pi} + \piref\frac{\partial^2 u}{\partial \pi \partial \piref} + \frac{(\alpha-1)\piref}{u}\frac{\partial u}{\partial \pi}\frac{\partial u}{\partial \piref}) <0,
\label{eq:opntim-derivative}
\end{equation}
which achieves the optimism defined in Definition~\ref{def:optim}.
\end{proof}

\section{Experiments}

\subsection{Implementation Details}
\label{sec:apd:implement}
\paragraph{Algorithm.}
Following prior work \citep{selm, xpo, avoid}, we adopt the same algorithmic backbone for empirical validation in order to isolate and compare the effects of different exploratory bonuses in the loss function. This backbone bypasses reward modeling at each iteration through reward reparameterization, a procedure commonly referred to as iterative DPO \citep{rlhflow}.
Previous methods of exploratory bonuses further reparameterize their bonus terms to make them compatible with this framework. To extend the original framework to $\alpha$-divergence, we extend the iterative DPO with f-DPO~\citep{f-div} loss, and likewise reparameterize our generalized exploratory bonus accordingly. The full procedure is summarized in Algorithm~\ref{alg:iterative}. \rebuttal{Since the only difference is the loss function in Line 5 of Algorithm~\ref{alg:iterative}, our GEB does not induce any additional compute costs.}

\begin{algorithm}[!ht]
    \renewcommand{\algorithmicrequire}{\textbf{Input:}}
	\renewcommand{\algorithmicensure}{\textbf{Output: }}
	\caption{Iterative Online Algorithm with Exploratory Bonus}
    \label{alg:iterative}
    \begin{algorithmic}[1] 
        \REQUIRE  Reference model $\piref$, iteration number $T$, prompt set for each interation $\mathcal{D}_1,\dots,\mathcal{D}_T$, reward function $r$; 
        \ENSURE Trained model $\pi_T$;
	\FOR{iteration t = 1, 2, . . . , T}
    \FOR{$x \in \mathcal{D}_t$}
    \STATE $y_1,y_2\sim\piref(\cdot|x)$ and obtain the rewards $r(y_1),r(y_2)$;
    \STATE Rank the reward and denote $y^+,y^-$ as the preferred and dispreferred response between $y_1,y_2$ and update $D_t=\{x,y^w,y^l\}$; 
    \STATE $\pi_t = \arg\min_\pi \mathcal{L}_\textrm{DPO} - \kappa\mathcal{L}_{bonus}(\pi)$\label{line:loss}
    \STATE update $\piref$ with $\pi_t$ (optional)
    \ENDFOR 
    \ENDFOR
\end{algorithmic}
\end{algorithm}

\paragraph{Hyperparameter settings and environments.}
All experiments are conducted on two NVIDIA H200 GPUs. When training and sampling, the max length is set to 2048. For training, the batch size per device is set to 2; we enable gradient checkpointing, and the gradient accumulation step is set to 64; the learning rate is 5e-7 with a cosine scheduler, and the warm-up ratio is 0.03. In the main experiments, we use the best performance with $\kappa$ with a suitable ratio range to f-dpo loss across $1,1e-2,1e-4,1e-6,1e-8$. For sampling, the temperature is set to 1. For in-domain evaluation and MATH evaluation, we set temperature to 0.6 and top-p to 0.9; we use the default setting of alpaca-eval.

\newpage

{

\begin{figure}
    \centering
    \includegraphics[width=0.49\linewidth]{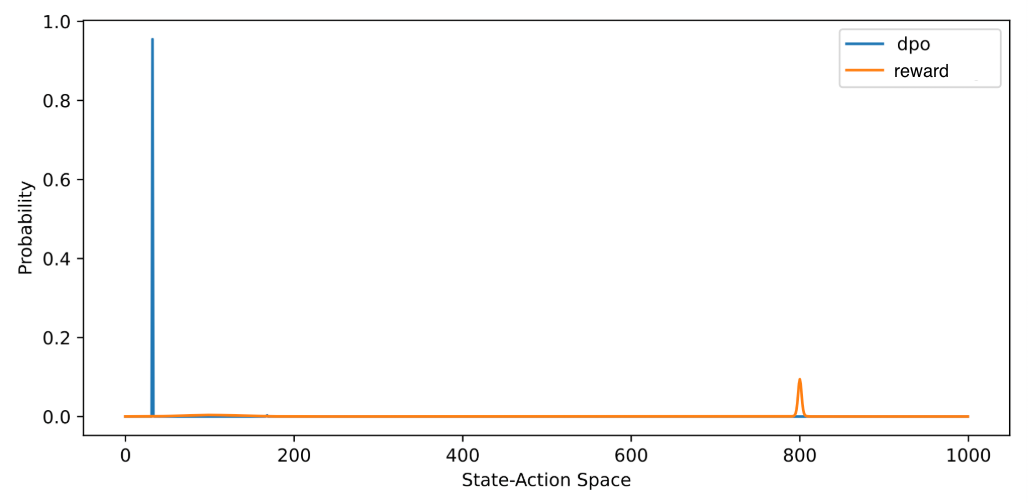}
    \includegraphics[width=0.49\linewidth]{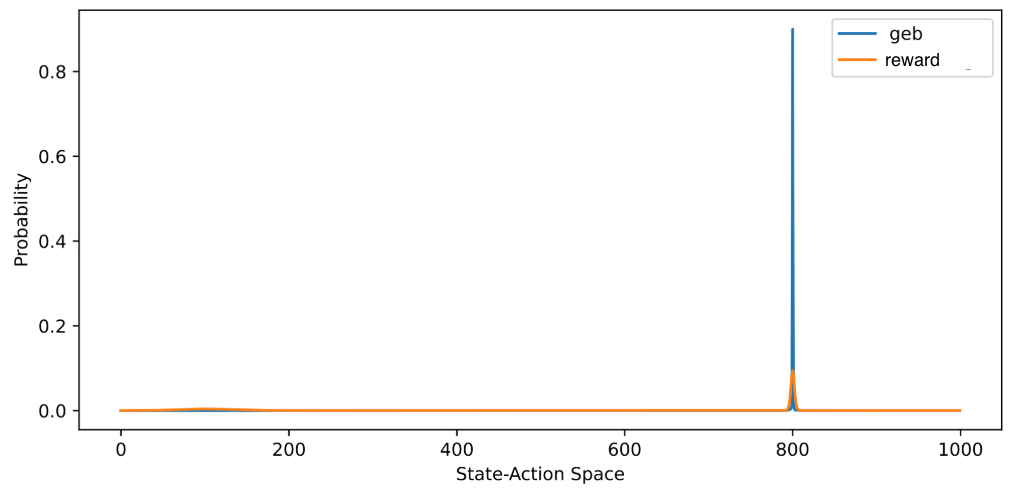}
    \caption{Comparison on the bandit policy distributions trained with DPO (left) and GEB (right). The DPO policy collapses to a local optimum, while the GEB policy continues to explore and ultimately chooses the globally preferred action.}
    \label{fig:apd:bandit}
\end{figure}

\begin{wrapfigure}{r}{0.5\textwidth} 
    \centering
    \vspace{-1.5cm}
    \includegraphics[width=\linewidth]{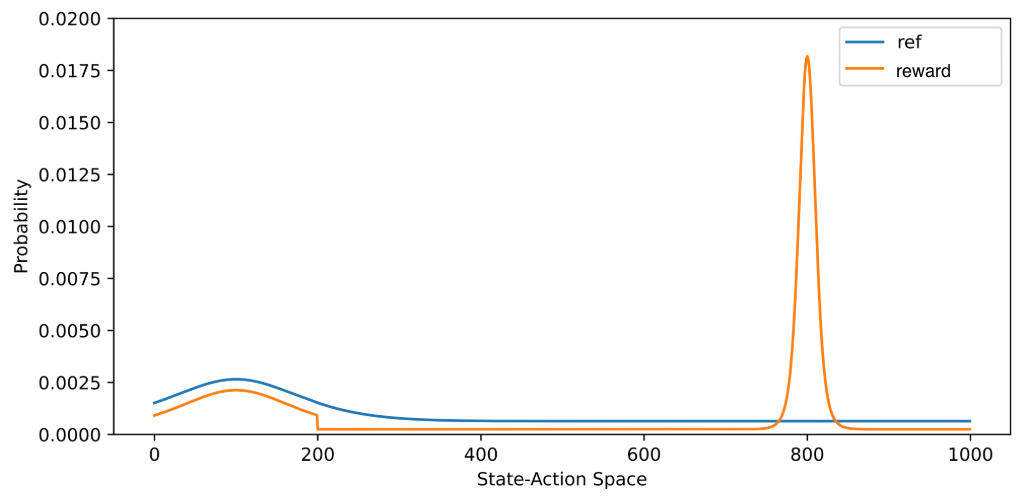}
    \vspace{-0.75cm}
    \caption{Initial reference bandit distribution (``ref'') and the reward distribution. Because the most preferred action lies in a low-probability region, it is rarely visited under purely passive exploration.}
    \label{fig:apd:ref-bandit}
    \vspace{-1.2cm}
\end{wrapfigure}

\section{Toy Experiments}
\label{apd:sec:bandit}
To illustrate a setting in which GEB yields substantial improvement, we construct a toy example in which the most preferred action lies in a rarely visited region.

\subsection{Experimental setting}
We consider a 1000-arm bandit with 1000 parameters, each parameter corresponding to a distinct arm. As shown in Fig.~\ref{fig:apd:ref-bandit}, the most preferred action lies in a rarely visited region, making it unlikely to be sampled under pure passive exploration, as in f-DPO. Each experiment is run for 5000 iterations. At each iteration, the bandit policy generates 64 rollouts to form a batch of 32 preference pairs. The learning rate is set to 1e-2 with no warm-up phase.

\subsection{Results and Analyses}
As shown in Fig.~\ref{fig:apd:bandit}, the bandit policy trained with DPO becomes trapped in a local optimum: its probability mass collapses onto a suboptimal action because the policy never encounters the truly preferred action during training. In contrast, all three GEB variants successfully recover the desired distribution (right panel of Fig.~\ref{fig:apd:bandit}), concentrating probability on the most preferred action. This demonstrates that GEB effectively promotes exploration into low-probability regions, enabling the policy to discover and select the optimal action despite its small initial likelihood.

\section{Supplement results of main experiments}

\subsection{Repeated run of the main experiment}

\begin{table}[t!]
\caption{Repeated in-domain evaluation on different exploration bonuses. \textbf{Boldface} and \underline{underline} indicate the best and the second-best results, respectively. GEB-$\pi$, GEB-$1/\pi$, and GEB-$\mathrm{arctanh}(\pi-1)$ corresponds to $1+\alpha-\pi$, $1/\pi$, and $\mathrm{arctanh}(1-\pi)+\alpha$ as in Table~\ref{tab:exploration-bonus}.}
\centering
\small
\scalebox{0.8}{
    \begin{tabular}{c|cccccc}
    \toprule
           &  \multicolumn{2}{c}{KL ($\alpha$=1)} &   \multicolumn{2}{c}{Hel. ($\alpha$=0.5)}&\multicolumn{2}{c}{f-KL ($\alpha$=0)} \\ 
           & WR& AvgR&WR& AvgR&WR& AvgR  \\
           \midrule
\multicolumn{7}{c}{\textit{Mistral-Instruct-v0.3}}\\
\midrule

    f-DPO & 77.49 {\footnotesize $\pm 1.32$}&0.7426 {\footnotesize $\pm 0.0201$} &72.70 {\footnotesize $\pm 0.96$}&0.6488{\footnotesize $\pm 0.0114$}&50.83{\footnotesize $\pm 0.62$}&0.5902{\footnotesize $\pm 0.0330$} \\
    
\midrule   
GEB-$\pi$ &\textbf{80.60}{\footnotesize $\pm 0.96$} &0.7531{\footnotesize $\pm 0.0162$} &\underline{74.41}{\footnotesize $\pm 0.72$}&\textbf{0.6723}{\footnotesize $\pm 0.0211$}&\underline{51.75}{\footnotesize $\pm 0.51$}&\underline{0.5986}{\footnotesize $\pm 0.0134$}\\
GEB-$1/\pi$ &\underline{79.69}{\footnotesize $\pm 1.10$}&\textbf{0.7614}{\footnotesize $\pm 0.0048$}&73.87{\footnotesize $\pm 0.48$}&0.6342{\footnotesize $\pm 0.0032$}&\textbf{51.86}{\footnotesize $\pm 0.60$}&\textbf{0.6088}{\footnotesize $\pm 0.0025$} \\
GEB-$\mathrm{arctanh}(\pi-1)$ & 79.60{\footnotesize $\pm 1.01$}&\underline{0.7544}{\footnotesize $\pm 0.0242$}&\textbf{74.57}{\footnotesize $\pm 0.95$}&\underline{0.6642}{\footnotesize $\pm 0.0231$}&{51.69}{\footnotesize $\pm 1.03$}&{0.5962}{\footnotesize $\pm 0.0153$} \\
           \midrule
\multicolumn{7}{c}{\textit{LLaMA-3-8B-SFT}}\\
\midrule
    
    f-DPO & 72.89 {\footnotesize $\pm 0.86$}& 0.7926{\footnotesize $\pm 0.0320$}&71.19{\footnotesize $\pm 0.68$}&\underline{0.7915}{\footnotesize $\pm 0.0101$}&66.62{\footnotesize $\pm 0.96$}	&0.7566{\footnotesize $\pm 0.0049$}\\
    
\midrule   
GEB-$\pi$ & {73.90}{\footnotesize $\pm 1.10$} & \underline{0.8048}{\footnotesize $\pm 0.0086$}&72.08{\footnotesize $\pm0.74$}&	\textbf{0.7921} {\footnotesize $\pm 0.0031$}&67.79{\footnotesize $\pm 1.14$}&\textbf{0.7698}{\footnotesize $\pm 0.0036$}  \\
    
GEB-$1/\pi$ &\underline{74.36}{\footnotesize $\pm 0.92$}& \textbf{0.8084}{\footnotesize $\pm 0.0043$}&\underline{72.28}{\footnotesize $\pm 0.78$} &0.7733{\footnotesize $\pm 0.0143$}& \underline{67.99} {\footnotesize $\pm 0.75$}& \underline{0.7618}{\footnotesize $\pm 0.0021$} \\

GEB-$\mathrm{arctanh}(\pi-1)$ &\textbf{ 74.55}{\footnotesize $\pm 1.02$}& 0.8032{\footnotesize $\pm 0.0019$}&\textbf{72.57}{\footnotesize $\pm 1.03$}&{0.7902}{\footnotesize $\pm 0.0035$} & \textbf{68.34}{\footnotesize $\pm 0.76$}& 0.7612{\footnotesize $\pm 0.0130$} \\
    \bottomrule
    \end{tabular}}
    \label{tab:apd-in-domain}
    \vspace{-1em}
\end{table}

\begin{table}[t!]
\caption{p-value of independent p-test between win-rates of GEB variants and f-DPO}
\label{tab:p-value}
    \centering
    \begin{tabular}{c|ccc|ccc}
    \toprule
    \multirow{2}{*}{f-DPO vs.} & \multicolumn{3}{c|}{Mistral-Instruct-v0.3} & \multicolumn{3}{|c}{LlaMA-3-8B-SFT} \\
    & KL.& Hel.&f-KL. & KL.&Hel.&f-KL.\\
    \midrule
    GEB-$\pi$&0.003 &0.014 &0.034& 0.145 & 0.084& 0.117\\
     GEB-$1/\pi$ & 0.021 & 0.051& 0.027 &0.031 & 0.047& 0.037  \\
   GEB-$\mathrm{arctanh}(\pi-1)$ &0.023 &0.015& 0.155&  0.024& 0.042& 0.014\\
   \bottomrule
    \end{tabular}
    \vspace{-2em}
\end{table}

To assess the statistical significance of GEB’s performance gains, we repeat each experiment five times and report the mean and standard deviation. As shown in Table~\ref{tab:apd-in-domain}, GEB achieves consistently higher performance than baseline methods, with statistically significant improvements. Moreover, we conduct independent p-test between GEB variants and f-DPO. The p-value is shown in Table~\ref{tab:p-value}. Most p-values are less then 0.05, which means the improvement of GEB is statistically significant.

\subsection{Semantic coherence of the sampled responses}
We use GPT-4 to evaluate whether a given sentence is coherent, nonsensical, or contains meaningless content. The scoring scale ranges from 0 (fully coherent) to 3 (complete nonsense with no coherent meaning). We apply this evaluation to the responses generated in the final training iteration of DPO and the three GEB variants. The resulting scores are as follows.

\begin{table}[t!]
    \centering
    \caption{Semantic coherence scores of responses produced by policy models trained using DPO and the GEB variants.}
    \label{tab:apd:cohere}
    \begin{tabular}{c|cccc}
    \toprule
         & DPO & GEB-$\pi$ &  GEB-$1/\pi$ & GEB-$\text{arctanh} \pi$\\
         \midrule
       KL.  & 1.24 & 1.08 & 1.32 & 1.19\\ 
       Hel. & 1.33 & 1.42 & 1.12 & 1.23\\
       fKL. & 1.32 & 1.38 & 1.32 & 1.30\\
        \bottomrule
    \end{tabular}
    \vspace{-1em}
\end{table}

As shown in Table~\ref{tab:apd:cohere}, the responses produced by the GEB variants exhibit semantic coherence comparable to those generated by DPO. This indicates that, in practice, GEB promotes exploration into moderately underrepresented yet still semantically meaningful regions of the output space.

\subsection{The choice of $u$}
\label{apd:sec:choice-of-u}
The three variants of the exploration bonuses in Table~\ref{tab:exploration-bonus} represent different instantiations of the GEB framework. Each satisfies the optimism condition in Definition~\ref{def:optim} and exhibits consistent performance improvements on the alignment task.

Nonetheless, the curvature of $u$ with respect to $\pi$ meaningfully affects the optimization dynamics of the exploratory bonus. For instance, when $u = 1 - \pi + \alpha$, the function is linear, thus the gradient of $u$ to $\pi$ is a constant. Therefore, the per-trajectory incentive to decrease $\pi$ is constant. In contrast, when $u$ is convex—such as $u = 1/\pi$—the gradient magnitude diminishes as $\pi$ becomes larger. This results in a more conservative reduction of $\pi$ compared to the linear case.

Takeaway: The curvature of $u$ with respect to $\pi$ governs the behavior of the exploration bonus. Greater convexity leads to a more conservative shift of probability mass toward underexplored regions.

\begin{table}[t!]
    \centering
    \caption{Ablation study on the sample responses used for the bonus term calculation. We compare the results on UltraFeedback (win rate) across three divergences with all responses (all) and with the rejected responses only (rejected-only).}
    \label{tab:apd:bonus-rej}
    \scalebox{0.8}{
    \begin{tabular}{c|cc|cc|cc}
    \toprule
       Method  &  KL all &KL rejected-only &Hel. all &Hel. rejected-only &fKL all &fKL rejected-only \\
       \midrule
         GEB-$\pi$&79.78 &\textbf{81.00}& 73.40& \textbf{75.48}&51.32 & \textbf{51.68}\\
         GEB-$\frac{1}\pi$&79.56 &\textbf{80.00}& 72.25& \textbf{73.97}&51.61 &\textbf{52.26}\\
        GEB-$\mathrm{arctan}\pi$&79.64 & \textbf{79.71}&73.11 & \textbf{75.69}& 51.25& \textbf{52.76}\\
        \bottomrule
    \end{tabular}}
\end{table}

\paragraph{Restricting the exploratory bonus to rejected responses}
\label{sec:apd:bonus-rej}
Restricting the bonus term to rejected responses is a common practice in prior works on exploratory bonuses \citep{selm,avoid,vpo}. Importantly, this restriction does not constitute a theoretical departure from our framework. The optimism guarantee in Theorem~\ref{thm:optim} hinges on increasing the probability of low $\pi_{\text{ref}}$ regions, i.e., underexplored regions. Because rejected samples lie precisely in these low-probability areas, applying the bonus only to rejected responses preserves the intended optimism direction. While preserving the theoretical guarantee on the optimism, it also shows a practical advantage as shown in Table~\ref{tab:apd:bonus-rej}.

}

\end{document}